\documentclass[letterpaper, 10 pt, conference]{ieeeconf}

\IEEEoverridecommandlockouts

\overrideIEEEmargins


\usepackage{amsthm}
\usepackage{mathrsfs}
\usepackage{amsmath}
\usepackage{amssymb}
\usepackage{amsfonts}
\usepackage{xcolor}
\usepackage{graphicx}
\usepackage[nameinlink,capitalise]{cleveref}
\usepackage{algorithm}
\usepackage[noend]{algpseudocode}
\usepackage{xspace}

\newcommand{\xbf}{\mathbf{x}}

\newcommand\E[1]{\mathbb{E}\left[\,#1\,\right]}

\renewcommand{\Pr}[1]{\mathbb{P}\left[\,#1\,\right]}
\newcommand{\tmeet}{{T_{\text{meet}}}}
\newcommand{\thit}{{T_{\text{hit}}}}
\newcommand{\tmix}{{T_{\text{mix}}}}
\newcommand{\nr}{N_{\mathcal{R}}}
\newcommand{\eps}{\epsilon_{\alpha}}
\newcommand{\na}{n_{\alpha}}
\newcommand{\Lt}{|\mathcal{L}|}
\newcommand{\f}{f(\nr,\Lt,\delta)}
\newcommand{\g}{\na}
\renewcommand{\epsilon}{\varepsilon}

\newtheorem{theorem}{Theorem}
\newtheorem{lemma}[]{Lemma}
\newtheorem{definition}[]{Definition}

\newtheorem{problem}[]{Problem}
\newtheorem*{assumptions*}{Assumptions}

\newtheorem{proposition}[]{Proposition}

\usepackage{enumerate}

\pdfinfo{
   /Author (Matthew Cavorsi*, Frederik Mallmann-Trenn*, David Salda\~na, and Stephanie Gil)
   /Title  (Dynamic Crowd Vetting: 
Collaborative Detection of Malicious Robots in Dynamic Communication Networks)
   /CreationDate (D:20101201120000)
   /Subject (Multi-robot networks)
   /Keywords (Multi-robot networks)
}

\begin{document}

\title{Dynamic Crowd Vetting: 
Collaborative Detection of Malicious Robots in Dynamic Communication Networks}

\author{Matthew Cavorsi*, Frederik Mallmann-Trenn*, David Salda\~na, and Stephanie Gil%
\thanks{The authors gratefully acknowledge partial support through the Air Force Office of Scientific Research [grant number: FA9550-22-1-0223],  the Office of Naval Research (ONR) Young Investigator Program (YIP) [grant number: N00014-21-1-2714]
and by the EPSRC [grant number: EP/W005573/1.]
.}
\thanks{(*Co-primary authors). Matthew Cavorsi and Stephanie Gil are with the School of Engineering and Applied Sciences, Harvard University, Cambridge, MA, USA
        {\tt\small (e-mail: mcavorsi@g.harvard.edu; sgil@seas.harvard.edu})}%
\thanks{Frederik Mallmann-Trenn is with the Department of Informatics, King's College, London, UK
        {\tt\small (e-mail: frederik.mallmann-trenn@kcl.ac.uk)}}%
\thanks{David Salda\~na is with the Autonomous and Intelligent Robotics Laboratory --AIRLab-- at Lehigh University, Bethlehem, PA, USA
        {\tt\small (e-mail: saldana@lehigh.edu)}}%
}

\maketitle
\begin{abstract}
Coordination in a large number of networked robots is a challenging task, especially when robots are constantly moving around the environment and there are malicious attacks within the network.
Various approaches in the literature exist for detecting malicious robots, such as message sampling or suspicious behavior analysis. However, these approaches require every robot to sample every other robot in the network, leading to a slow detection process that degrades team performance.
This paper introduces a method that significantly decreases the detection time for legitimate robots to identify malicious robots in a scenario where legitimate robots are randomly moving around the environment.
Our method leverages the concept of ``Dynamic Crowd Vetting" by utilizing observations from random encounters and trusted neighboring robots' opinions to quickly improve the accuracy of detecting malicious robots. The key intuition is that as long as each legitimate robot accurately estimates the legitimacy of at least some fixed subset of the team, the second-hand information they receive from trusted neighbors is enough to correct any misclassifications and provide accurate trust estimations of the rest of the team. We show that the size of this fixed subset can be characterized as a function of fundamental graph and random walk properties.
Furthermore, we formally show that as the number of robots in the team increases the detection time remains constant.
We develop a closed form expression for the critical number of time-steps required for our algorithm to successfully identify the true legitimacy of each robot to within a specified failure probability.
Our theoretical results are validated through simulations demonstrating significant reductions in detection time when compared to previous works that do not leverage trusted neighbor information.
\end{abstract}

\IEEEpeerreviewmaketitle

\section{Introduction}

Multi-robot teams can cooperate to solve a plethora of tasks that a singular robot could not achieve alone \cite{yan2013survey, rizk2019cooperative} such as coverage or persistent surveillance \cite{davydov2019sparsity, boldrer2022multi}, efficient exploration of a large area \cite{ivanov2017joint}, and flocking \cite{CBF}, among others.
\begin{figure}[b!]
    \centering
    \includegraphics[scale=0.3]{./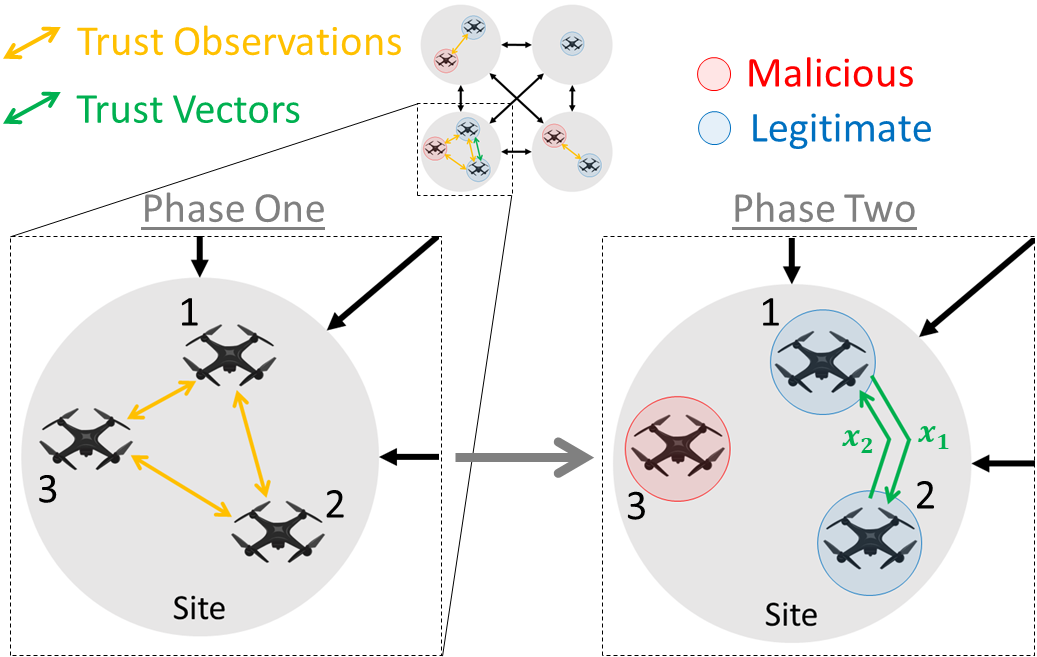}
    \caption[Depiction of the Dynamic Crowd Vetting problem setup. Robots must transition across sites while collecting trust observations of the other robots they encounter. Then, robots share their trust information with the neighbors they choose to trust.]{Depiction of the Dynamic Crowd Vetting algorithm. In Phase 1 robots transition across sites while collecting trust observations of other robots they encounter which are stored in a local trust vector. In Phase 2 robots share their trust vector with trusted neighbors in order to improve their probability of correctly classifying the trustworthiness of others.}
    \label{fig:DCV}
\end{figure}
However, whenever a task requires the coordination of multiple robots for successful task completion, there exists potential for \emph{malicious}, or non-cooperating robots, to hinder the team's performance. Recent works have leveraged the concept of ``observing'' robots, and gathering information in order to identify robots that are potentially untrustworthy~\cite{AURO,pippin2014trust,securearray, cheng2021general,robotTrustSchwager,parwana2022trust}. This information, hereafter referred to as a \emph{trust observation}, centers on the principle that gathering more information, or multiple trust observations, can improve the accuracy of this inter-robot trust~\cite{Mallmann2021crowd,yemini2021characterizing}.

Previous works have taken a controls perspective to gathering trust observations by developing strategies that favor frequent encounters between robots. For example, the work in \cite{yu2020modular} designs specific routes for the team to follow as they patrol an environment that strategically increases the inter-robot interaction opportunities. However, this requires that robots cooperate and follow their predefined routes.
The papers \cite{Cavorsi2022adaptive,berman2009optimized,bandyopadhyay2017probabilistic,jang2018local} consider environments that are discretized into regions, called \emph{sites}, where robots provide persistent surveillance by patrolling while maintaining a desired distribution of robots over sites. 
In our previous work \cite{Cavorsi2022adaptive}, we present a strategy that reduces the detection time by encouraging frequent encounters between robots using a stochastic site transition rule akin to a random walk. However, this strategy requires that every robot encounters every other robot many times in order to develop an accurate trust estimation, which can be a significantly long process, especially as the number of robots or sites increases. 

A recent algorithm called \emph{Crowd Vetting} \cite{Mallmann2021crowd} exploits opinion dynamics \cite{Acemoglu2011,hegselmann2002opinion}, where \emph{the opinions of trusted neighbors can be used to fortify a robot's opinion of another}. In the Crowd Vetting algorithm, legitimate robots share opinions, which is shown to improve their trust estimation while requiring fewer observations than if robots rely solely on their own observations. However, the existing Crowd Vetting algorithm is limited to static networks where each robot observes the same subset of robots over time. While dynamic networks increase the diversity of robot's encounters, thus alluding to a greater benefit from using the opinions of trusted neighbors, it becomes increasingly difficult to regulate the number of trust observations every pair of robots has of each other.
This introduces new challenges, since every robot can have a different number of observations of every other, and thus the information shared between robots comes with different levels of accuracy, making it difficult to arrive at any analytical performance guarantees regarding the trust estimation. Furthermore, a naive usage of indirect information from untrustworthy sources gives the potential for errors to propagate through the team.

The main contribution of this paper is the development of an algorithm, called \emph{Dynamic Crowd Vetting} (DCV), that significantly reduces the detection time by allowing legitimate robots to leverage second-hand (indirect) opinions of trusted neighbors in dynamic networks. Our DCV algorithm computes the trust estimation in two phases. In Phase 1, robots transition between sites to estimate the trustworthiness of the team, which they store in a vector called a \emph{trust vector}. The goal in Phase 1 is to accurately classify \emph{at least some fixed subset} of the team correctly.
Then, in Phase~2, robots continue transitioning between sites, while this time sharing their opinions, i.e., their trust vector, with other robots that they trust. Finally, we show that a relatively simple majority rule algorithm for deciding trust from shared information is enough to correct misclassifications and stem the propagation of wide-scale misinformation as long as each legitimate robot classifies a sufficient proportion of the team correctly in Phase 1. Furthermore, we show that the sufficient proportions can be characterized as a function of fundamental graph and random walk properties.
Additionally, we formally show that as the number of robots in the team increases, the detection time remains constant. This is in contrast to the logarithmic growth in the number of time-steps seen by the \emph{Individual Protocol} (where robots do not leverage neighboring opinions).

\section{Problem Formulation} \label{sec:pf_DCV}

Consider a team of $\nr$ robots, $\mathcal{R} = \{ 1, 2, \dots, \nr \}$, that move through a discrete environment composed of regions, also called sites. The environment is a topological map modeled as an undirected graph $\mathcal{G}=(\mathcal{V},\mathcal{E})$, where the vertices $\mathcal{V} = \{ 1,2,\dots,N_{\mathcal{V}} \}$ represent the sites, and the edges $\mathcal{E} \subseteq \mathcal{V} \times \mathcal{V}$ represent paths between sites, where the operator~$\times$ represents the Cartesian product of two sets. A robot can move from site~$\iota$ to site~$\omega$ if there is an edge $(\iota,\omega) \in \mathcal{E}$. Furthermore, robots can always remain at a current site, i.e., $(\iota,\iota) \in \mathcal{E}$ for all $\iota \in \mathcal{V}$. We assume the graph~$\mathcal{G}$ is connected, so that there always exists a path between any pair of sites. Robots can communicate or observe each other if they are at the same site. The neighborhood of a robot~$i$, denoted by~$\mathcal{N}_i(t)$, consists of all robots $j \in \mathcal{R}$ that robot~$i$ can observe at time-step~$t$. For the sake of analysis, we include each robot $i$ in its own neighborhood $\mathcal{N}_i(t)$. A time-step is defined as an opportunity for a robot to make a transition between adjacent sites and observe the robots at that site, i.e., any robot can transition to a new site and gather new observations any time-step.

\subsection{Background}

\subsubsection{Gathering Trust Observations in Dynamic Networks}
In this paper, we are interested in the class of problems where an unknown subset of the team may be \emph{malicious}, denoted by $\mathcal{M}\subset \mathcal{R}$, and \emph{legitimate} robots $\mathcal{L} = \mathcal{R} \backslash \mathcal{M}$, can validate information and the legitimacy of neighboring robots by utilizing observations of one another, which we call \emph{trust observations}. A trust observation of robot $j$ by robot $i$, denoted by $\alpha_{i,j}(t)$, represents a noisy, imperfect measurement of the legitimacy of robot $j$. \footnote{ One example of such observations comes from the works in \cite{AURO, Mallmann2021crowd, yemini2021characterizing}. In these works, the trust observations are stochastic and are determined from physical properties of wireless transmissions.} We assume that trust observations are independent for any pair $(i,j)$ and at any time $t$, and that robots can only gather observations of one another when they are neighbors, i.e., $j \in \mathcal{N}_i(t)$. Furthermore, while we do not make any assumptions on the distributions of the trust observations, we impose particular assumptions on their expectation for analytical purposes. Similar to the works in \cite{AURO, Mallmann2021crowd, yemini2021characterizing}, we assume the trust observations satisfy \begin{align}\label{eq:alpha_legit} \E{\alpha_{i,j}(t) ~|~  \text{$j\in \mathcal{L}$} } \geq 1/2+\eps, \end{align}
	and 
	\begin{align}\label{eq:alpha_mal} \E{\alpha_{i,j}(t) ~|~  \text{$j\in \mathcal{M}$} } \leq 1/2-\eps, \end{align}
where the value $\eps \in (0,1/2]$ represents the quality of the observation. A low value $\eps \approx 0$ means the observation is completely ambiguous, while an observation with $\eps \approx 1/2$ gives almost certain information about the legitimacy of the transmitting robot. In \cite{Cavorsi2022adaptive}, each robot keeps a \emph{trust vector}, denoted by $\xbf_i(t)$. The goal of the trust vector is to store the correct legitimacy of every other robot in the team, where a $1$ in the $j^{th}$ entry of vector $\xbf_i(t)$, denoted by $[\xbf_i]_j(t)$, represents that robot $i$ believes robot $j$ to be trustworthy, and $[\xbf_i]_j(t) = 0$ otherwise. Since the trust observations~$\alpha_{i,j}(t)$ are assumed to be noisy, each robot requires multiple observations of their neighbors in order to arrive at some confidence in the validity of their trust vector. In \cite{Cavorsi2022adaptive} the specific number of observations required by every robot of every other is denoted by $\na$, and is assumed to be an arbitrary, but given quantity. In this paper we will analytically determine the proper number of observations $\na$ that yields a desired success probability $1 - \delta / \nr$ when our proposed algorithm is used, for some user-defined failure probability $\delta$.
We seek to minimize the number of trust observations needed, as well as the time window~$T$ required to gather them, using the \emph{Crowd Vetting Algorithm}.

\subsubsection{The Crowd Vetting Algorithm}

The Crowd Vetting algorithm \cite{Mallmann2021crowd} utilizes opinion dynamics and offers a way for robots to share their trust vectors with their trusted neighbors in order to not only reach an agreement between all legitimate neighbors on their trust vectors, but also improve the probability that the agreed upon trust vector is correct. However, the Crowd Vetting algorithm in its current form is limited to the case where all robots communicate with the same set of neighbors each time-step (static communication network). The goal is for all legitimate robots to reach an agreement on a final trust vector, $\xbf_i^*$, such that for every robot $j \in \mathcal{R}$,
\begin{equation}
    [\xbf_i^*]_j = 
    \begin{cases}
    1, & \text{if } j \in \mathcal{L}, \\ 0, & \text{if } j \in \mathcal{M}.
    \end{cases}
\end{equation} 
To do so, each robot $i \in \mathcal{L}$ will gather $\na$ trust observations of every other robot and form an interim trust vector from those observations. Then, each robot shares its interim trust vector with its trusted neighbors, and uses majority rule between its own and its trusted neighbors' opinions to determine whether or not to trust the other robots. In this paper, we extend the Crowd Vetting algorithm to scenarios where the robots move, and thus potentially encounter different robots each time-step.

\subsubsection{Random Walks on Graphs}

In this paper, our results partially depend on the topology of the site graph, and the random walk done by the legitimate robots as they transition between sites. We define a \emph{trajectory} of a legitimate robot~$i$ by a set of states, denoted $\chi_i(1), \chi_i(2), \dots, \chi_i(t_f)$ corresponding to the site that the robot occupied at each time-step from some arbitrary starting \mbox{time $t=1$} to some arbitrary finishing \mbox{time $t=t_f$}. The trajectory of a robot depends on the random walk that it performs over the site graph. We assume that the random walks performed by the robots are irreducible and aperiodic, leading them to have a unique stationary distribution $\pi$. We represent the time required for robots to gather trust observations of each other as a function of the meeting time of the graph, denoted by~$\tmeet$, the hitting time of the graph, denoted by $\thit$, and the mixing time of the graph, denoted by $\tmix$. See \cite{lovasz1993random} for an intuition about these quantities. The meeting time is defined as $\tmeet = \max_{\iota,\omega} \tmeet(\iota,\omega)$, where $\tmeet(\iota,\omega)$ is the expected time it takes for random walks starting on nodes $\iota$ and $\omega$ to meet. 
We say two random walks done by \mbox{robots $i$ and $j$} meet if $\chi_i(\kappa) = \chi_j(\kappa)$ for some time $\kappa$. The hitting time is defined as $\thit = \max_{\iota,\omega} \thit(\iota,\omega)$, where $\thit(\iota,\omega)$ is the expected time it takes for a random walk starting on node $\iota$ to reach node $\omega$. 
The meeting time and hitting time are well-studied Markov Chain quantities and the interested reader can find bounds for common graphs in \cite[page 169]{AF14}. Additionally we compute the hitting time and meeting time for the graphs used in our simulations using \cite[Theorem 3.1]{rao2013finding} and \cite[Theorem 1]{george2018meeting}, respectively.
Finally, the mixing time~$\tmix$ is the time required for the distribution of the sites each robot occupies over time to approximately converge to the stationary distribution $\pi$.

\subsection{Problem Statement}

In this paper we extend the Crowd Vetting algorithm to support dynamic scenarios such as the case of mobile robots where their set of neighbors may change with time. When the graph modeling the site transitions, $\mathcal{G}$, is connected, it was shown in \cite{horn2012matrix} that all robots $i \in \mathcal{L}$ will eventually visit every site in the graph $\mathcal{G}$. This implies that any robot will encounter all the other robots given a long enough time window, $t = \{t_0, t_0+T\}$, characterized by the length of time~$T$ from some arbitrary initial time~$t_0$, since each of them will visit every site.

\begin{problem}
Given a desired failure probability $\delta$ and trust observations~$\alpha_{i,j}(t)$ satisfying \eqref{eq:alpha_legit} and \eqref{eq:alpha_mal}, design an algorithm that reduces the length~$T$ of the time window \mbox{$t \in \{t_0,t_0+T\}$} required for all robots $i \in \mathcal{L}$ to return the correct final trust vector $\xbf_i^*$ with probability at least \mbox{$1 - \delta/\nr$.}
\label{prob:prob1}
\end{problem}

\section{Algorithms}
\label{sec:main1_DCV}

In order to extend the Crowd Vetting algorithm to support dynamic scenarios we first introduce the concept of \emph{time-window neighborhoods} that capture the history of encounters between robots over a time window $T$. 
\begin{definition}[Time-window neighborhood]
	A \emph{time-window neighborhood} of a robot~$i$ is defined as the union of its set of neighbors over a time-window, $T$, i.e., \mbox{$\mathcal{N}_i^T(t) = \bigcup_{\kappa = t-T}^{t} \mathcal{N}_i(\kappa)$,} for any $t > T$.
\label{def:dyn_neighborhood}
\end{definition}
In a time-window neighborhood, since the neighbors of each robot may change each time-step, it is difficult to ensure that a robot gathers trust observations of all others a sufficient number of times. Next, we describe the process for estimating trust vectors solely using each robot's individual (direct) observations of other robots, which we call the \emph{Individual Protocol}.

\subsection{Individual Protocol}

In our previous work \cite{Cavorsi2022adaptive}, when the robots need to estimate the legitimacy of their neighbors, they gather trust observations by transitioning frequently between sites, which is called the \emph{fast transition state}. In this work, we focus our analysis on this fast transition state, and denote the transition matrix used in the fast transition state by a robot~$i$, by~$\mathbf{P}_i$. We define the fast transition state as a lazy random walk
\begin{equation}
    [\mathbf{P}_i]_{\iota,\omega} := \begin{cases} \frac{1}{2}, & \iota = \omega, \\
    \frac{1}{2\cdot\sum_{\{\omega | (\iota,\omega) \in \mathcal{E}\}} 1}, & (\iota,\omega) \in \mathcal{E}, \\ 0, & \text{otherwise},
    \end{cases}
    \label{eq:P_fast_DCV}
\end{equation}
where $[\mathbf{P}_i]_{\iota,\omega}$ represents the $(\iota,\omega)$ entry of matrix $\mathbf{P}_i$. We note that the fast transition matrix~$\mathbf{P}_i$ is the same for all robots~$i\in\mathcal{L}$ since they are all performing random walks on the same site graph, but we include the index~$i$ to clarify that it is the transition matrix used by robot~$i$ since our later analysis is often done from the perspective of a particular robot~$i\in\mathcal{L}$. Furthermore, any robot~$j\in\mathcal{M}$ does not necessarily use the fast transition matrix designed in~\eqref{eq:P_fast_DCV}.

As the robots move throughout the environment, they gather trust observations of their neighbors. Let the vector~$\mathbf{o}_{i,j}$ be a $\eta_{i,j} \times 1$ vector that consists of every trust observation gathered by robot~$i$ of robot~$j$ over the time window~$T$, where $\eta_{i,j} \leq T$ represents the total number of observations gathered for the pair. Then, robot~$i$ determines a value $\beta_{i,j}(t)$, known as the \emph{trust function}, from the vector~$\mathbf{o}_{i,j}$ as follows:
\begin{equation}
    \beta_{i,j}[t] = \sum_{\kappa = 1}^{\eta_{i,j}} \left( [\mathbf{o}_{i,j}]_{\kappa}-\frac{1}{2} \right),
    \label{eq:beta}
\end{equation}
where $[\mathbf{o}_{i,j}]_{\kappa}$ is the $\kappa^{th}$ entry in vector $\mathbf{o}_{i,j}$. From~\eqref{eq:alpha_legit} and~\eqref{eq:alpha_mal}, we know that $\alpha_{i,j}(t) < \frac{1}{2}$ in expectation if $j \in \mathcal{M}$, and so over the summation in \eqref{eq:beta} we have by the linearity of expectation \cite{yemini2021characterizing}, that $\beta_{i,j}(t) < 0$ in expectation. Similarly, $\beta_{i,j}(t) > 0$ in expectation if $j \in \mathcal{L}$. Therefore, each robot $i \in \mathcal{L}$ develops their interim trust vector, $\xbf_i(t)$, using the trust function $\beta_{i,j}(t)$ where
\begin{equation}
    [\xbf_i]_j(t) = 
    \begin{cases}
    1, & \text{if } \beta_{i,j}(t) \geq 0, \\ 0, & \text{if } \beta_{i,j}(t) < 0.
    \end{cases}
    \label{eq:int_trust_vector}
\end{equation}
The full process for estimating the legitimacy of each robot using individual trust observations is described in \cref{alg:IP}. The algorithm requires robots to transition between sites for $\tau_{\text{ind}} = \frac{4 \log(2\nr^3 / \delta)}{\eps^2}\tmeet$ time-steps, using the transition matrix~$\mathbf{P}_i$ in~$\eqref{eq:P_fast_DCV}$, with the goal of gathering at least~$\na$ trust observations for every other robot. Robots that do not gather at least~$\na$ trust observations of each other choose to not trust each other by default. We derive this time duration~$\tau_{\text{ind}}$ and the number of observations~$\na$, and show that it leads to a success probability of $1 - \delta / \nr$ in \cref{sec:analysis_DCV}.

\begin{algorithm}[h]
\caption{Individual Protocol for a robot~$i$ (\textbf{Individual})
\\
Input: time window~$\tau_{\text{ind}}$ (\cref{thm:IP}), transition matrix~$\mathbf{P}_i$ in~\eqref{eq:P_fast_DCV}, number of observations~$\na$ (\cref{thm:IP})
\\
Output: trust vector $\xbf_i(t)$}
\label{alg:IP}
\begin{algorithmic}[1]

\State Using the fast transition matrix $\mathbf{P}_i$ in \eqref{eq:P_fast_DCV}, gather trust observations of neighboring robots for $\tau_{\text{ind}}$ time-steps. Keep track of the number of total observations~$\eta_{i,j}$, gathered for each robot $j \in \mathcal{R}$ over that time.

\State Compute the \emph{trust vector} $\xbf_i(t)  \in \{0,1 \}^{\nr}$: For every $j \in \mathcal{R}$ compute the entry~$[\xbf_i]_j(t)$ using~\eqref{eq:int_trust_vector} if the number of observations gathered of robot~$j$ is at least~$\na$, otherwise $[\xbf_i]_j(t) = 0$. Set $[\xbf_i]_i(t)=1$.

\end{algorithmic}
\end{algorithm}

\subsection{Dynamic Crowd Vetting Algorithm}

The DCV algorithm seeks to utilize trusted neighboring opinions in order to reduce the time $\tau_{\text{ind}}$ required to achieve a success probability of at least $1-\delta/\nr$ by requiring the robots to only transition long enough to gather $\na$ observations of a subset of the network, rather than the entire network. Define the trusted neighborhood of a robot $i$ at time~$t$ as
\begin{equation}
    \Theta_i^T(t) := \{ j \in \mathcal{N}_i^T(t) \text{ }|\text{ } [\xbf_i]_j(t) = 1 \}.
\end{equation}
The process for running the DCV algorithm is described in \cref{alg:DCV2}. Similarly to the Individual Protocol, the algorithm requires that every legitimate robot use transition matrix~ $\mathbf{P}_i$ in~\eqref{eq:P_fast_DCV}. This time, the robots transition between sites for \mbox{$\tau = \min \left\{ \f \thit, \frac{4 \log(4\nr^3/\delta)}{\eps^2}\tmeet \right\}$} time-steps in two phases, with the goal of gathering at least $\na$ trust observations of a large subset of the overall team, where \mbox{$\f = \frac{26}{(1-1/e)^2}\g$,} \mbox{$\g = \frac{8}{0.1 \eps^2} \log \left( \frac{e^{2e} \nr}{0.1 \delta \Lt} \right)$,} and $e$ is the Euler constant. We derive the time duration~$\tau$ and number of observations needed~$\na$, and show that it leads to a success probability of \mbox{$1 - \delta / \nr$} in \cref{sec:analysis_DCV}.

\begin{algorithm}[h]
\caption{DCV Algorithm for a robot~$i$
\\
Input: time window~$\tau$ (\cref{pro:main}), transition matrix~$\mathbf{P}_i$ in~\eqref{eq:P_fast_DCV}, number of observations~$\na$ (\cref{pro:main})
\\
Output: final trust vector $\xbf_i^*$}
\label{alg:DCV2}
\begin{algorithmic}[1]

\Statex \underline{Phase 1:}

\State Compute the \emph{interim trust vector} $\xbf_i(t)  \in \{0,1 \}^{\nr}$ using \textbf{Individual}($\tau$, $\mathbf{P}_i$, $\na$)

\Statex 

\Statex \underline{Phase 2:}

\State Transition for another $\tau$ time-steps while gathering the interim trust vector $\xbf_j(t)$ from all trusted neighbors \mbox{$j \in \{ \mathcal{N}_i(t) | [\xbf_i]_j(t) = 1 \}$.}

\State Compute the \emph{final trust vector} $\xbf_i^* \in \{0,1\}^{\nr}$: Assign each entry $[\xbf_i^*]_k$ by majority rule, i.e., $[\xbf_i^*]_k = 1$ if $\left( \sum_{j \in \Theta_i^{\tau}(t)} [\xbf_j]_k(t) \right) \geq \frac{|\Theta_i^{\tau}(t)|}{2}$, and $[\xbf_i^*]_k = 0$ otherwise.

\end{algorithmic}
\end{algorithm}

\cref{alg:DCV2} has the robots arrive at the final trust vector faster by running the Individual Protocol for a shorter length of time in Phase 1, and then utilizing trusted neighboring opinions to fortify their own in Phase 2. In this way, robots do not need to sufficiently observe the trustworthiness of \textbf{every} other robot since they can rely on trusted neighbors to give them information about robots they have not encountered enough.

\section{Analysis}
\label{sec:analysis_DCV}

We organize this section similarly to \cref{sec:main1_DCV}. First, we provide our theoretical analysis regarding the time required for robots to estimate the true legitimacy of all other robots using the Individual Protocol (\cref{alg:IP}). Then, we provide analysis regarding the time required for robots to estimate the true legitimacy of all others using our proposed DCV algorithm (\cref{alg:DCV2}) and show the reduction in the time required compared to the Individual Protocol (based on previous work \cite{Cavorsi2022adaptive}).

\subsection{Individual Protocol}

We start by deriving the time required for the Individual Protocol to return the correct final trust vector $\xbf_i(t)$ for all $i \in \mathcal{L}$.

\begin{theorem}
\label{thm:IP}
Given a user-specified failure probability $\delta > 0$, site topology $\mathcal{G}$ with meeting time $\tmeet$, and trust observations~$\alpha_{i,j}(t)$ satisfying \eqref{eq:alpha_legit} and \eqref{eq:alpha_mal}. If all legitimate robots $i \in \mathcal{L}$ use the Individual Protocol (\cref{alg:IP}) with the transition matrix $\mathbf{P}_i$ in \eqref{eq:P_fast_DCV}, \mbox{$\tau_{\text{ind}} = \frac{4 \log(2\nr^3/\delta)}{\eps^2}\tmeet$} time-steps, and \mbox{$\na = \log(2\nr^3/\delta)/(2\eps^2)$} observations of every other robot, then the final trust vector $\xbf_i(t)$ will be correct for all $i \in \mathcal{L}$ with probability at least $1 - \frac{\delta}{\nr}$.
\end{theorem}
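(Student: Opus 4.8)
The plan is to decompose the event that some legitimate robot returns an incorrect trust vector into two kinds of pairwise failure and to control each by a union bound so that the total stays below $\delta/\nr$. Fix an ordered pair $(i,j)$ with $i \in \mathcal{L}$. The first failure mode is a \emph{concentration failure}: robot~$i$ has gathered at least $\na$ observations of $j$, yet the sign of $\beta_{i,j}$ disagrees with $j$'s true status. The second is an \emph{encounter failure}: $j \in \mathcal{L}$ but robot~$i$ gathers fewer than $\na$ observations of $j$, so the default rule wrongly sets $[\xbf_i]_j = 0$. Note that if $j \in \mathcal{M}$ and fewer than $\na$ observations are gathered, the default value $0$ is already correct, so malicious robots never cause encounter failures; this is why only legitimate--legitimate pairs matter for the second mode.

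For the concentration failure I would appeal to Hoeffding's inequality. Conditioned on $\eta_{i,j} = n \geq \na$, each centered observation $[\mathbf{o}_{i,j}]_\kappa - \tfrac12$ lies in $[-\tfrac12,\tfrac12]$ and, by \eqref{eq:alpha_legit}--\eqref{eq:alpha_mal}, has expectation of magnitude at least $\eps$ with the correct sign; hence $\beta_{i,j}$ has expected magnitude at least $n\eps$, and the probability it crosses zero is at most $\exp(-2n\eps^2) \le \exp(-2\na\eps^2)$. Since extra observations only sharpen the bound, averaging over $n \ge \na$ gives the same estimate, and plugging in $\na = \log(2\nr^3/\delta)/(2\eps^2)$ yields a per-pair probability of at most $\delta/(2\nr^3)$. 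Summing over the at most $\nr^2$ ordered pairs with $i \in \mathcal{L}$ contributes at most $\delta/(2\nr)$.

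The heart of the argument, and the step I expect to be the main obstacle, is the encounter analysis: showing that within $\tau_{\text{ind}}$ time-steps any two legitimate robots are co-located at least $\na$ times with high probability. Here both walks follow the lazy chain $\mathbf{P}_i$, so their co-location is governed by $\tmeet$. I would partition $\tau_{\text{ind}}$ into $k = \tau_{\text{ind}}/(2\tmeet) = 2\log(2\nr^3/\delta)/\eps^2$ consecutive blocks of length $2\tmeet$; by Markov's inequality the walks meet at least once inside a block with probability at least $1/2$, and by the strong Markov property this holds \emph{uniformly over the configuration at the start of the block}, since $\tmeet$ is the worst case over starting sites. Consequently the number of blocks containing a meeting stochastically dominates a $\mathrm{Bin}(k,1/2)$ variable, and each such block supplies at least one fresh observation. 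With target count $\na = k/4$, a Chernoff lower-tail bound together with $\eps \le 1/2$ drives the per-pair encounter-failure probability below $\delta/(2\nr^3)$; a union bound over the at most $\nr^2$ legitimate--legitimate pairs adds another $\delta/(2\nr)$.

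Combining the two modes, if no pair suffers either failure then every entry of every legitimate robot's trust vector is correct: legitimate $j$ are observed enough and classified by the correct sign of $\beta_{i,j}$, while malicious $j$ are either observed enough and correctly rejected or defaulted to $0$; the self-entry $[\xbf_i]_i=1$ is correct since $i\in\mathcal{L}$. The total failure probability is at most $\delta/(2\nr) + \delta/(2\nr) = \delta/\nr$, giving the claimed success probability $1 - \delta/\nr$. The delicate points are the independence (via the Markov property) justifying the binomial domination across blocks, and checking that the constants in $\tau_{\text{ind}}$ and $\na$ force both the Chernoff and Hoeffding exponents past $\log(2\nr^3/\delta)$.
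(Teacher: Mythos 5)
Your proposal is correct and takes essentially the same route as the paper's own proof: the paper likewise splits failure into misclassification despite having at least $\na$ observations (handled by \cref{lem:maj}, which is precisely your Hoeffding step) and insufficient meetings between legitimate--legitimate pairs (handled by the same $2\tmeet$-block Markov-plus-Chernoff argument, with malicious robots correctly rejected by default), then union-bounds both modes over at most $\nr^2$ pairs each to reach $\delta/\nr$. Your explicit appeal to the strong Markov property and stochastic domination by a $\mathrm{Bin}(k,1/2)$ variable merely makes rigorous what the paper leaves implicit when it applies the Chernoff bound to the number of meetings.
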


\begin{proof}
Consider any legitimate robot $i \in \mathcal{L}$. By \cref{lem:maj} in \cref{sec:aux}, robot~$i$ will correctly classify another robot~$j$ with probability at least \mbox{$1-\delta/(2\nr^3)$} if it gathers \mbox{$\na = \log(2\nr^3/\delta)/(2\eps^2)$} trust observations of robot~$j$.

If $j \in \mathcal{M}$, then there are two cases: 1) if robot~$i$ meets robot~$j$ at least~$\na$ times, then again the probability of correctly classifying robot~$j$ is at least \mbox{$1-\delta/(2\nr^3)$}. 2) if robot~$i$ meets robot~$j$ fewer times, then by default, robot~$i$ will correctly decide to not trust robot~$j$. Taking the Union bound \cite{union} over all pairs of robots in \mbox{$\mathcal{L} \times (\mathcal{L} \cup \mathcal{M})$} gives a probability of failure of at most \mbox{$| \mathcal{L} \times (\mathcal{L} \cup \mathcal{M}) | \cdot \frac{\delta}{2\nr^3}$}. Here the operator~$\times$ represents the Cartesian product of two sets.

It remains to argue that robot~$i$ will meet any robot $j \in \mathcal{L}$ at least $\na$~times in \mbox{$\frac{4 \log(2\nr^3/\delta)}{\eps^2}\tmeet$} time-steps. The probability of robot~$i$ and robot~$j$ meeting after $2\tmeet$ time-steps is at least~$1/2$ by Markov's inequality \mbox{\cite[Chapter 3.1]{mitzenmacher2017probability},} regardless of the sites that they start on. The expected number of meetings~$\mu$ after \mbox{$\frac{4 \log(2\nr^3/\delta)}{\eps^2}\tmeet = 8\na \tmeet$} time-steps is at least \mbox{$\mu \geq 2\na$}. Thus, by the Chernoff bound (\cref{lem:Chernoff} in \cref{sec:aux}) setting $\gamma = 1/2$, we get that for the number of meetings, $X$,
\begin{equation}
    \begin{aligned}
    \Pr{X \leq (1-\gamma)\mu} &= \Pr{X \leq \na} \leq e^{-\gamma^2 \mu/2} \\ &\leq e^{-\na/4} \leq \frac{\delta}{2\nr^3}.
    \end{aligned}
    \label{11}
\end{equation}
Taking the Union bound over all pairs of legitimate robots gives a failure probability of at most \mbox{$|\mathcal{L} \times \mathcal{L}|\cdot\frac{\delta}{2\nr^3}$}. 

Summing the failure probabilities corresponding to misclassifying a robot and gathering an insufficient number of trust observations gives \mbox{$|\mathcal{L} \times (\mathcal{L} \cup \mathcal{M})| \cdot \frac{\delta}{2\nr^3} + |\mathcal{L} \times \mathcal{L}| \cdot \frac{\delta}{2\nr^3} \leq \frac{\delta}{\nr}$.}
\end{proof}

We note that the bound of \cref{thm:IP} is tight in the following sense: there exists an infinite class of graphs such that the required time for all pairs of robots to meet $\Omega(\log \nr)$ times is at least $\Omega(\tmeet \log \nr)$. An example of this is the line graph over different numbers of sites.

\subsection{Dynamic Crowd Vetting algorithm}

Next we present the main result of this paper, which is the time required for the DCV algorithm to return the correct final trust vector $\xbf_i^*$ for all $i \in \mathcal{L}$. First, let
\begin{flalign}
    q &= \frac{\delta \rho_1}{e^{2e}} \frac{\Lt}{\nr}, \label{eq:q} \\
    \g &= \frac{8}{\rho_1 \eps^2} \log \left( \frac{1}{q} \right), \label{eq:na} \\
    \f &= \frac{26}{(1-1/e)^2}\g. \label{eq:f}
\end{flalign}

\begin{theorem}
\label{thm:main}
Given a user-specified failure probability $\delta > 0$, site topology $\mathcal{G}$ with hitting time $\thit$ and meeting time $\tmeet$, and trust observations~$\alpha_{i,j}(t)$ satisfying \eqref{eq:alpha_legit} and \eqref{eq:alpha_mal}. If all legitimate robots $i \in \mathcal{L}$ use DCV (\cref{alg:DCV2}) with the transition matrix $\mathbf{P}_i$ in \eqref{eq:P_fast_DCV} and $\na$~observations given in \eqref{eq:na}, then the final trust vector $\xbf_i^*$ will be correct for all $i \in \mathcal{L}$ in time $O(\min\{ \thit \log(\frac{\nr}{\delta \Lt}), \tmeet \log(\frac{\nr}{\delta}) \})$ with probability at least $1 - \frac{\delta}{\nr}$.
\end{theorem}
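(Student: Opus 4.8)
The plan is to decompose the analysis along the two phases of \cref{alg:DCV2} and to show that three probabilistic guarantees combine to give the result: (i) a per-pair \emph{classification} bound, (ii) a random-walk \emph{encounter} bound, and (iii) a \emph{majority-correction} bound. At a high level, Phase~1 need only correctly classify a large---but not complete---subset of the team, and Phase~2's majority rule then repairs the remaining misclassifications provided each target robot is known correctly by a strict majority of the voters in every trusted neighborhood. I would first fix an arbitrary legitimate robot $i\in\mathcal L$ and an arbitrary target $k\in\mathcal R$, bound the probability that the final entry $[\xbf_i^*]_k$ is wrong, and close with a union bound over all $\Lt\cdot\nr$ such pairs, tuning the constants so the total is at most $\delta/\nr$.

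For Phase~1 I would establish two facts. First, the \emph{classification} step: conditioned on robot $i$ gathering at least $\na$ observations of $k$, the sign rule \eqref{eq:int_trust_vector} misclassifies $k$ with probability at most $q$; this is exactly the Chernoff/Hoeffding estimate of \cref{lem:maj} applied with the value of $\na$ in \eqref{eq:na}, which is why $\na$ scales like $\eps^{-2}\log(1/q)$. Second, the \emph{encounter} step: within $\f\,\thit$ time-steps robot $i$ meets any fixed robot $k$ at least $\na$ times except with probability at most $\rho_1$. Here I would show that, once the walks have mixed, meetings with a fixed robot accrue at a rate governed by $\thit$, partition the $\f\,\thit$ horizon into $\Theta(\na)$ epochs of length $\Theta(\thit)$---each contributing a fresh meeting with probability at least $1-1/e$ by Markov's inequality on the hitting time---and apply a Chernoff bound (\cref{lem:Chernoff}) to the number of successful epochs; the factor $26/(1-1/e)^2$ in \eqref{eq:f} is precisely what tunes this count. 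Combining the two facts, for each $k$ robot $i$ is ``wrong'' (misclassifies or under-observes) with probability at most $q+\rho_1$, and a malicious $k$ enters the trusted set $\Theta_i^{\tau}$ only through a misclassification, i.e.\ with probability at most $q$.

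In Phase~2 I would analyze the majority vote for the fixed pair $(i,k)$. Conditioning on the (random) trusted set $\Theta_i^{\tau}$, the votes $\{[\xbf_j]_k : j\in\Theta_i^{\tau}\cap\mathcal L\}$ are independent across the legitimate voters $j$ (they come from disjoint observation streams), each correct with probability at least $1-q-\rho_1$, while the at most $|\Theta_i^{\tau}\cap\mathcal M|$ malicious voters may behave adversarially. I would first show $\Theta_i^{\tau}$ is ``clean and large'': a Chernoff bound gives $|\Theta_i^{\tau}\cap\mathcal M|$ at most a tiny fraction of $\Lt$ (each malicious robot is trusted with probability $\le q$, and $q\nr\ll\Lt$), while the encounter bound gives $|\Theta_i^{\tau}\cap\mathcal L|=\Omega(\Lt)$. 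A second Chernoff bound then shows the correct legitimate votes exceed half of $|\Theta_i^{\tau}|$ even after discarding every malicious vote and every wrong legitimate vote, so $[\xbf_i^*]_k$ is correct. Summing the failure probabilities of all steps over the $\Lt\cdot\nr$ pairs, the choices of $q$, $\na$, and $\f$ in \eqref{eq:q}--\eqref{eq:f}---in particular the $e^{2e}$ and $\rho_1$ constants---are exactly what force the grand total below $\delta/\nr$.

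Finally, for the time bound and the minimum: each phase runs for $\tau=\min\{\f\,\thit,\ \tfrac{4\log(4\nr^3/\delta)}{\eps^2}\tmeet\}$, giving the $O(\thit\log\tfrac{\nr}{\delta\Lt})$ term directly from $\f\,\thit$ (treating $\eps$ as constant) and the $O(\tmeet\log\tfrac{\nr}{\delta})$ term as a fallback: if the second argument is the smaller one, the robots have run essentially the Individual-Protocol horizon, so they meet \emph{every} robot $\na$ times and Phase~1 alone already succeeds by the argument of \cref{thm:IP}, after which Phase~2 leaves an already-correct vector unchanged. I expect the main obstacle to be the encounter analysis and its interaction with the majority bound: relating the number of \emph{meetings} between two moving robots to the \emph{hitting} time $\thit$ cleanly, and then arguing the majority margin survives simultaneously for all targets $k$ and all observers $i$ despite the dependence between \emph{who} lands in a trusted set and \emph{how} those voters classify $k$. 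Controlling these dependencies---ideally by showing that trusted-set membership (driven by the walks of $i$ and $j$) is independent of a voter's verdict on $k$ (driven by the observation stream of $j$ and $k$)---is the crux of the proof.
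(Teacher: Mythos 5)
Your overall architecture (Phase-1 classification and encounter bounds, Phase-2 majority repair, and the fallback to \cref{thm:IP} when the $\tmeet$ term of the minimum is active) matches the paper's in spirit, but your quantitative accounting has a genuine gap that cannot be fixed by ``tuning constants.'' You treat $\rho_1$ as the \emph{per-pair} probability that robot~$i$ fails to meet a fixed robot~$k$ at least $\na$ times, so each legitimate voter is correct only with constant probability $1-q-\rho_1\approx 0.9$. A Chernoff bound on a majority over $\Theta(\Lt)$ such voters then yields a per-pair failure probability of order $e^{-c\Lt}$ for an absolute constant $c$. This quantity is independent of $\delta$ and of $q$, so summing it over the $\Lt\cdot\nr$ pairs gives $\Lt\nr e^{-c\Lt}$, which is not below $\delta/\nr$ when $\delta$ is small or when $\Lt$ is small (the theorem must hold for any $\delta>0$ and, per \cref{pro:main}, any $\Lt\geq 1$; already $\Lt=O(1)$ kills the bound). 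In the paper, $\rho_1$ is not a probability at all: it is the allowed \emph{fraction} of legitimate robots that robot~$i$ may fail to meet $\na$ times. The per-pair encounter failure probability is instead a power of $q$ (the Chernoff step in Property~1 gives $e^{-\na/4}\leq q^{2/\rho_1}$), the \emph{number} of robots violating each criterion is controlled by the binomial tail bound of \cref{thm:bounds-binomial-distribution}, giving $q^{\Omega(\Lt)}$, and the union bound is closed by \cref{lem:proba_bound}, whose case analysis shows $q^{\Lt}\leq\delta/(4\nr\Lt)$ precisely because $q$ itself scales like $\delta\Lt/\nr$ via \eqref{eq:q}. Ironically, your own epoch argument, if carried through, produces a $q$-power bound (via $e^{-\Omega(\na)}$ and \eqref{eq:na}), not $\rho_1$; the error is in then substituting the constant $\rho_1$ into the voter-correctness probability.

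Two further points. First, the independence you hope to establish---that trusted-set membership is independent of a voter's verdict on $k$---is false for a single voter $j$: both events depend on $j$'s trajectory (whether $j$ met $i$ enough times, and whether $j$ met $k$ enough times). What is true, and what the paper uses, is independence \emph{across} voters $j$ once the trajectory of robot~$i$ is fixed in advance (and, for your per-pair variant, that of $k$ as well); the correlated failure modes of a single voter are then handled additively. The paper in fact avoids per-pair conditioning altogether: it defines the event~$E$ of four count-properties and proves a deterministic counting lemma (\cref{lem:T}) showing that if at most $\rho_2\Lt$, $\rho_3\Lt$, $\rho_4\Lt$ robots fail the respective criteria, then for \emph{every} pair simultaneously the correct votes $F_i^{L+}\geq(1-2\rho_2-\rho_4)\Lt$ outnumber the incorrect votes $F_i^{L-}+F_i^{M-}\leq(\rho_2+\rho_3)\Lt$ whenever $1>3\rho_2+\rho_3+\rho_4$. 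Second, your per-epoch meeting probability cannot come from ``Markov's inequality on the hitting time'' alone, because robot~$i$ is itself moving; the paper needs the moving-target lemma (\cref{lem:target}) applied to a walk started from (near) stationarity, hence the $4\tmix$ mixing prelude and the inequality $4\tmix+\thit\leq 13\thit$ that produces the constant in \eqref{eq:f}. Your sketch flags this as an obstacle but does not resolve it.
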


To prove \cref{thm:main}, we start by defining an event~$E$ which, when it holds, implies that all final trust vectors are correct deterministically. The event~$E$ consists of four conditions where a certain proportion, denoted \mbox{by $\rho_1,\rho_2,\rho_3,\rho_4$,} of the legitimate robots satisfies the condition.

Let~$E$ be the event that for every legitimate robot $i \in \mathcal{L}$, all of the following properties hold:
\begin{enumerate}
    \item Robot~$i$ meets at least $(1-\rho_1)\Lt$ legitimate robots at least $\g$ times in Phase 1.
    \item Robot~$i$ misclassifies at most $\rho_2\Lt$ legitimate robots in Phase 1.
    \item Robot~$i$ misclassifies at most $\rho_3\Lt$ malicious robots in Phase 1.
    \item Robot~$i$ meets at least $(1-\rho_4)\Lt$ legitimate robots at least once in Phase 2.
\end{enumerate}
The four properties of event~$E$ are visually depicted in \cref{fig:event_fig}.
\begin{figure}[b]
    \centering
    \includegraphics[scale=0.24]{./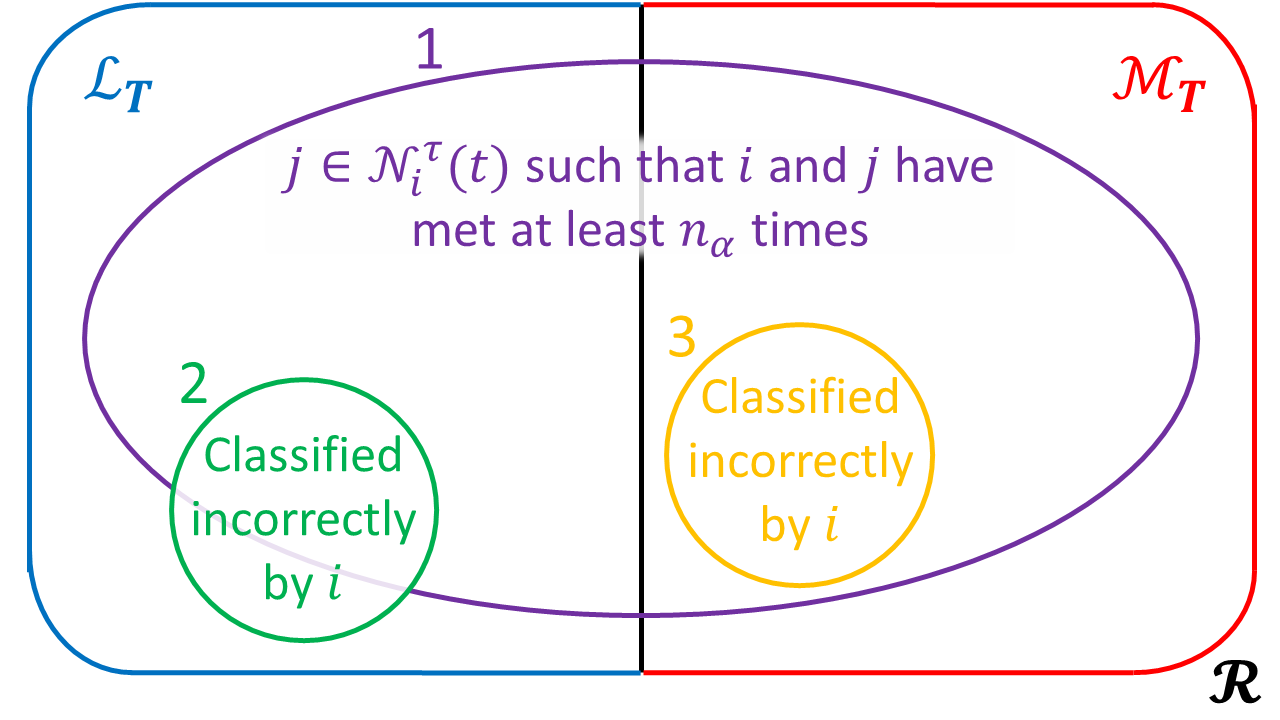}
\caption[Depiction of the regions specified by event~$E$ which implies that robots can correctly classify each other deterministically.]{Depiction of the regions specified by event~$E$. Of all legitimate and malicious robots, the proportion that robot~$i$ meets at least~$\na$ times during Phase 1 are represented by Property $1$ in purple. Robot~$i$ will misclassify some of the legitimate robots after Phase 1, represented by Property $2$ in green. This can happen if robot~$i$ misclassifies another robot having met it at least~$\na$ times, or by classifying it as malicious by default having not met it at least~$\na$ times. Additionally, robot~$i$ will misclassify some of the malicious robots after Phase 1, represented by Property $3$ in orange. This can only happen if robot~$i$ meets a malicious robot at least~$\na$ times. Property $4$ corresponds to the robots that robot~$i$ meets in Phase 2, which is similar to the region represented by Property $1$ in purple, with the distinction that robots only need to meet once in Phase 2 to be included in that region.}
    \label{fig:event_fig}
\end{figure}

We prove \cref{thm:main} by proving that when event $E$ holds, all final trust vectors are correct, which we prove in \cref{lem:T}, and the probability that event $E$ holds is at least $1 - \delta/\nr$, which we prove in \cref{pro:main}.

\begin{lemma}
\label{lem:T}
Assume that event~$E$ holds and that \mbox{$1 > 3\rho_2 + \rho_3 + \rho_4$,} where $\rho_2,\rho_3,\rho_4 \geq 0$. If the DCV algorithm is used with parameter \mbox{$\tau = \min \left\{ \f \thit, \frac{4 \log(4\nr^3/\delta)}{\eps^2}\tmeet \right\}$} where $\f$ is given in~\eqref{eq:f}, then any legitimate robot~$i$ classifies any other robot~$j$ correctly.
\end{lemma}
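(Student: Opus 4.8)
The plan is to treat \cref{lem:T} as a purely deterministic, combinatorial statement: once event $E$ is assumed, no probability remains, and everything follows from counting the Phase-2 voters and comparing populations. First I would fix an arbitrary legitimate observer $i$ and an arbitrary target $k$, and unwind the majority rule of \cref{alg:DCV2}: the quantity $[\xbf_i^*]_k$ is decided by the votes $[\xbf_j]_k$ of the trusted neighbors $j \in \Theta_i^{\tau}(t)$. The central idea is to partition these voters into the legitimate ones that $i$ correctly trusts, call this set $V_L = \Theta_i^{\tau}(t)\cap\mathcal{L}$, and the malicious ones that $i$ wrongly trusts, $V_M = \Theta_i^{\tau}(t)\cap\mathcal{M}$. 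Since malicious voters may report anything, I would grant the adversary the worst case: every $j \in V_M$ votes against the correct answer for $k$.

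Next I would quantify the three relevant populations directly from the four properties of $E$. For the honest majority, Property 4 guarantees $i$ meets at least $(1-\rho_4)\Lt$ legitimate robots in Phase 2, and Property 2 says $i$ distrusts at most $\rho_2\Lt$ legitimate robots, so at least $(1-\rho_2-\rho_4)\Lt$ legitimate robots are simultaneously met in Phase 2 and trusted; hence $|V_L| \geq (1-\rho_2-\rho_4)\Lt$. Property 3 caps the wrongly trusted malicious voters, giving $|V_M| \leq \rho_3\Lt$. The delicate quantity is $m$, the number of \emph{legitimate} voters that misclassify the fixed target $k$: if $k\in\mathcal{L}$ this is a Property-2 type error and I would bound it by $\rho_2\Lt$, while if $k\in\mathcal{M}$ it is a Property-3 type error bounded by $\rho_3\Lt$ (note that a legitimate robot that never met a malicious $k$ enough times defaults to distrusting it, so only robots in the ``met'' region can contribute). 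I would then reduce majority correctness, in both target cases and using the tie-breaking convention $[\xbf_i^*]_k=1 \iff \sum_{j}[\xbf_j]_k \geq |\Theta_i^{\tau}(t)|/2$, to the single inequality $|V_L| - 2m > |V_M|$.

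Finally I would substitute the bounds. For a legitimate target this reads $(1-\rho_2-\rho_4)\Lt - 2\rho_2\Lt > \rho_3\Lt$, i.e. $1 > 3\rho_2+\rho_3+\rho_4$, which is exactly the hypothesis, and the strict inequality yields a strict majority so $i$ classifies $k$ correctly. The malicious-target case is symmetric and produces the analogous bound $1 > \rho_2 + 3\rho_3 + \rho_4$; since in this construction unmet robots default to ``untrusted'' (inflating the legitimate-misclassification rate but not the malicious one, so that $\rho_2 \geq \rho_3$), the stated inequality is the binding one and subsumes it. I would remark that Property 1 and the precise value of $\tau$, $\f$, $\g$ play no role here, as they only serve to make $E$ hold in \cref{pro:main}; \cref{lem:T} itself is deterministic given $E$.

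The hard part will be the careful double accounting that produces the coefficient $3$ on $\rho_2$: a misclassified-but-trusted legitimate robot hurts twice, once by shrinking $|V_L|$ and once by adding a wrong vote, which is where the factor $3$ (rather than $2$) originates, and I must make sure the malicious voters $V_M$ are charged against the honest surplus rather than double-counted. A subtle point I would be explicit about is that the per-target misclassification bound on $m$ is used for a \emph{fixed} $k$; I would state clearly that the $\rho_2\Lt$ and $\rho_3\Lt$ caps are applied per target so that the worst-case concentration of honest errors on a single $k$ is controlled, and that the worst-case adversarial behavior of $V_M$ has already been absorbed into the inequality $|V_L|-2m>|V_M|$.
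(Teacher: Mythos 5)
Your overall strategy is the same as the paper's: treat the lemma as a purely deterministic counting statement, partition the Phase-2 voters into correctly-trusted legitimate robots and wrongly-trusted malicious robots, give the adversary worst-case votes, and reduce correctness of the majority rule to ``correct votes strictly exceed incorrect votes.'' For a legitimate target your bookkeeping ($|V_L|-2m>|V_M|$ with $|V_L|\geq(1-\rho_2-\rho_4)\Lt$, $m\leq\rho_2\Lt$, $|V_M|\leq\rho_3\Lt$) is algebraically identical to the paper's $F_i^{L+}>F_i^{L-}+F_i^{M-}$ with $F_i^{L+}\geq(1-2\rho_2-\rho_4)\Lt$, $F_i^{L-}\leq\rho_2\Lt$, $F_i^{M-}\leq\rho_3\Lt$, and it produces exactly the hypothesis $1>3\rho_2+\rho_3+\rho_4$. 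Your explicit remark that the misclassification caps must be read per fixed target (an ``incoming'' bound for the fixed target $j$, rather than the per-observer ``outgoing'' bound literally stated in event $E$) is a point the paper glosses over, and both proofs in fact need it.

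The gap is in your malicious-target case. There you bound the legitimate voters who misclassify the fixed malicious target by $\rho_3\Lt$, obtaining the condition $1>\rho_2+3\rho_3+\rho_4$, and then claim this is subsumed by the lemma's hypothesis because $\rho_2\geq\rho_3$ ``by construction.'' That claim is not available: $\rho_2$ and $\rho_3$ are free parameters of the lemma (caps appearing in event $E$), not empirical error rates, and the lemma assumes only $\rho_2,\rho_3,\rho_4\geq 0$ with $1>3\rho_2+\rho_3+\rho_4$. For instance, $\rho_2=0$, $\rho_3=0.5$, $\rho_4=0.4$ satisfies the lemma's hypothesis, yet your condition gives $\rho_2+3\rho_3+\rho_4=1.9>1$, so your argument does not cover this case; the observation that unmet robots default to ``untrusted'' constrains actual error counts, not the caps, and cannot be used to order two exogenous parameters. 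The paper avoids this asymmetry by charging the legitimate voters who misclassify the malicious target to the same $\rho_2$ cap used in the legitimate case (its case 2 again has $F_i^{L+}\geq(1-2\rho_2-\rho_4)\Lt$ and $F_i^{L-}\leq\rho_2\Lt$), so both cases yield the identical condition $1>3\rho_2+\rho_3+\rho_4$; one can debate whether that charging is consistent with the literal wording of Property 2, but it is what makes the stated hypothesis suffice. In the only regime where the lemma is actually invoked (\cref{pro:main}, with $\rho_2=\rho_3=0.2$) your condition and the paper's coincide, so your proof would suffice downstream, but as a proof of \cref{lem:T} as stated it is incomplete.
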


\begin{proof}
We analyze the process that robot~$i \in \mathcal{L}$ uses to determine the trustworthiness of another robot~$j$ by taking information from each trusted neighbor~$k\in \Theta_i^{\tau}(t)\backslash\{j\}$. We distinguish between two cases for any robot~$j$.

For the first case, assume robot~$j$ is legitimate. Let $F_i^{L+}$ be the number of legitimate robots that robot~$i\in \mathcal{L}$ trusts, that also trust robot~$j$, and that robot~$i$ met in Phase 2. This represents the number of legitimate robots that advocate for robot~$i$'s correct classification of robot~$j$ after Phase 2. For a legitimate robot~$k$ not to be counted in $F_i^{L+}$, one of three things must have happened: 1) robot~$i$ misclassified robot~$k$, 2) robot~$k$ misclassified robot~$j$, or 3) robot~$k$ did not meet robot~$i$ in Phase $2$. We have, by Union bound, \mbox{$F_i^{L+}\geq |L| - 2\rho_2 |L| - \rho_4|L| = (1-2\rho_2-\rho_4)|L|$.}

Let $F_i^{L-}$ be the number of legitimate robots that robot~$i$ trusts, that do not trust robot~$j$, and that robot~$i$ met in Phase 2. This represents the number of legitimate robots that advocate for robot~$i$'s \emph{incorrect} classification of robot~$j$ after Phase 2. We have, by Union bound, $F_i^{L-}\leq \rho_2 |L|.$ 

Finally, let $F_i^{M-}$ be the number of malicious robots that robot~$i$ trusts that claim that robot~$j$ is malicious. This represents the number of malicious robots that advocate for robot~$i$'s \emph{incorrect} classification of robot~$j$ after Phase 2. We can assume that no malicious robot communicates that it trusts robot~$j$. We have $F_i^{M-} \leq \rho_3|L|$.

The sufficient condition for robot~$i$ to classify robot~$j$ correctly would be for the number of robots giving robot~$i$ the correct information to be greater than the number of robots giving robot~$i$ the wrong information, i.e., \mbox{$F_i^{L+} > F_i^{L-} + F_i^{M-}$.} It follows that we have \mbox{$F_i^{L+} > F_i^{L-} + F_i^{M-}$} as long as \mbox{$1 > 3\rho_2 + \rho_3 + \rho_4$.} Hence, robot~$i$ classifies robot~$j$ correctly for any valid choice of $\rho_2$, $\rho_3$, and $\rho_4$. The process of trusted robots~$k \in \Theta_i^{\tau}(t)\backslash\{j\}$ sharing information with robot~$i$ to help robot~$i$ make a decision about robot~$j$ is depicted in \cref{fig:lem_schem}.

\begin{figure}[t]
    \centering
    \includegraphics[scale=0.28]{./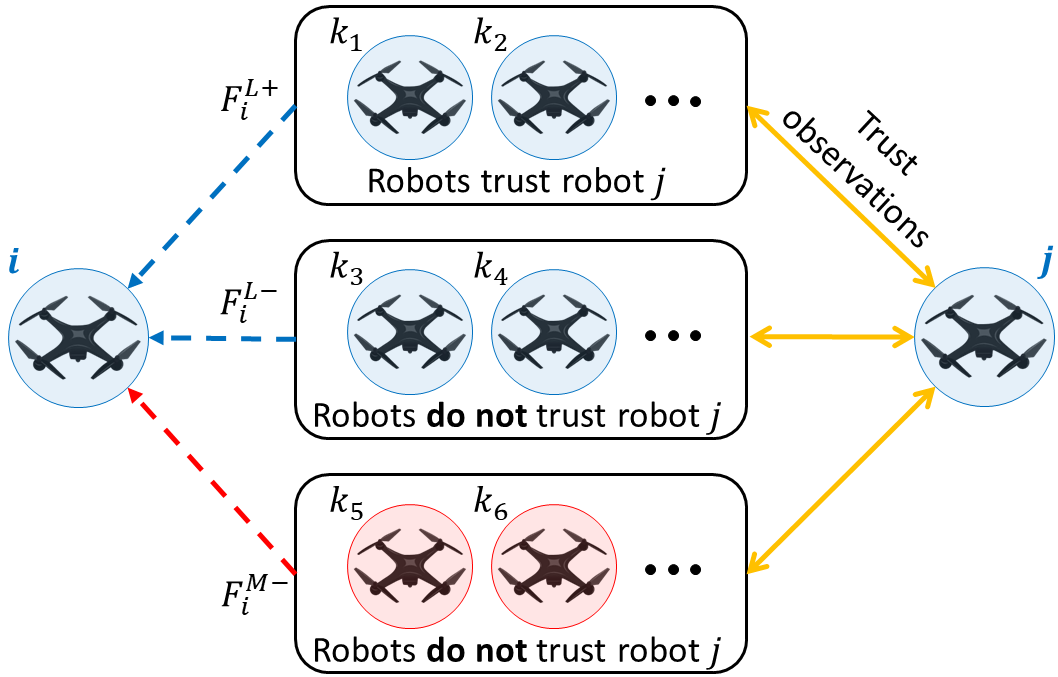}
    \caption[Schematic that depicts the process of sharing information with trusted neighbors in order to classify the trustworthiness of a robot.]{The robots~$k$ share information with robot~$i$ about their opinion of robot~$j$ to help robot~$i$ determine the trustworthiness of robot~$j$. Among the robots~$k$, some number of them ($F_i^{L+}$) will be legitimate and give robot~$i$ the correct information, some number ($F_i^{L-}$) will be legitimate but misclassify robot~$j$, and therefore give the wrong information, and some number ($F_i^{M-}$) will be malicious and purposely share the wrong information. Classification is done correctly if \mbox{$F_i^{L+} > F_i^{L-} + F_i^{M-}$.}}
    \label{fig:lem_schem}
\end{figure}

For the second case, assume that robot~$j$ is malicious. Let $F_i^{L+}$ be the number of legitimate robots that robot~$i$ trusts, that classify robot~$j$ as malicious, and that robot~$i$ met in Phase 2. We have, \mbox{$F_i^{L+}\geq |L| - 2\rho_2 |L| -\rho_4 |L| = (1-2\rho_2-\rho_4)|L|$.}

Let $F_i^{L-}$ be the number of legitimate robots that robot~$i$ trusts, that classify robot~$j$ as legitimate, and that robot~$i$ met in Phase 2. We have, $F_i^{L-}\leq \rho_2|L|$.

Finally, let $F_i^{M-}$ be the number of malicious robots that robot~$i$ trusts that claim that robot~$j$ is legitimate. We have $F_i^{M-} \leq \rho_3|L|$.

Clearly, we have \mbox{$F_i^{L+} > F_i^{L-} + F_i^{M-}$} as long as \mbox{$1 > 3\rho_2 + \rho_3 + \rho_4$.} Hence, robot~$i$ classifies robot~$j$ correctly for any valid choice of $\rho_2$, $\rho_3$, and $\rho_4$.

Either way, robot~$i$ classifies robot~$j$ correctly.
\end{proof}

\begin{proposition}
\label{pro:main}
Let $\Lt \geq 1$, $\rho_1 = 0.1$, \mbox{$\rho_2 = \rho_3 = 0.2$.} Given trust observations~$\alpha_{i,j}(t)$ satisfying \eqref{eq:alpha_legit} and \eqref{eq:alpha_mal}, if all legitimate robots $i \in \mathcal{L}$ use the transition matrix $\mathbf{P}_i$ in \eqref{eq:P_fast_DCV}, then, running DCV (\cref{alg:DCV2}) with parameters \mbox{$\tau = \min \left\{ \f \thit, \frac{4 \log(4\nr^3/\delta)}{\eps^2}\tmeet \right\}$}, and $\g$, where $\f$ is given in~\eqref{eq:f} and $\na$ is given in~\eqref{eq:na}, ensures that event~$E$ holds with probability at least \mbox{$1 - \frac{\delta}{\nr}$}.
\end{proposition}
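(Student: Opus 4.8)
The plan is to bound $\Pr{\bar E}$ by a union bound over the four properties and over all $\Lt$ legitimate robots, so that it suffices to control, for a single fixed robot $i\in\mathcal{L}$, the failure probability of each property. Each property asserts that some count --- legitimate robots met fewer than $\na$ times, legitimate robots misclassified, malicious robots misclassified, legitimate robots not met in Phase~2 --- stays below a $\rho\Lt$ threshold. So in every case I would (i) bound the relevant \emph{per-pair} failure probability, (ii) bound the expectation of the count by linearity, and (iii) apply \textbf{Markov's inequality} to turn ``the expected number of bad robots is small'' into ``at most $\rho\Lt$ bad robots with high probability.'' This first-moment step is exactly what allows us to require only that a $(1-\rho)$-fraction of robots be met often enough, rather than \emph{all} of them, and is the source of the improvement from the $\tmeet\log\nr$ bound to the $\thit$-dependent bound.

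For Properties~1 and~4 the per-pair quantity is a random-walk meeting probability. Since only legitimate robots are guaranteed to follow the lazy walk \eqref{eq:P_fast_DCV}, I restrict attention to pairs of legitimate robots, whose two walks are independent. I would relate the number of meetings over the window to the hitting time $\thit$ by partitioning the $\tau=\f\thit$ steps into blocks and arguing, via the Markov property and a Poisson/geometric tail, that each block produces a meeting with probability at least $1-1/e$; concentration of the meeting count (\cref{lem:Chernoff}) then shows that a fixed legitimate $j$ is met at least $\na$ times (Property~1), and at least once in Phase~2 (Property~4), except with probability decaying like $e^{-\Omega(\na)}$. Because $\na$ in \eqref{eq:na} scales with $1/\eps^2$, these two properties are satisfied with large slack. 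For the $\tmeet$-branch of the $\min$ I would instead invoke the $2\tmeet$-block meeting bound already used in \cref{thm:IP}. The definition of $q$ in \eqref{eq:q}, including the constant $e^{2e}$, is reverse-engineered from the meeting-count tail so that, after the Markov step with $\rho_1=0.1$, each per-robot failure is on the order of $\delta/\nr^2$.

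Properties~2 and~3 additionally use the classification accuracy of the trust function: by \cref{lem:maj}, once $i$ has gathered $\na$ observations of $j$ it misclassifies $j$ with probability at most $e^{-2\eps^2\na}$. For Property~2 a legitimate $j$ is misclassified either because it was met fewer than $\na$ times (defaulted to malicious) or because the trust function erred, and the choice $\rho_2=0.2$ leaves room for both contributions; for Property~3 a malicious $j$ can only be misclassified if it was met at least $\na$ times \emph{and} the trust function erred, so its expected count is at most $|\mathcal{M}|\,e^{-2\eps^2\na}\le\nr\,e^{-2\eps^2\na}$.

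I expect the main obstacle to be precisely this malicious case. Because $|\mathcal{M}|$ may be as large as $\nr-\Lt$ while the threshold is only $\rho_3\Lt$, the Markov step must absorb a factor $\nr/\Lt$, which is exactly why $\na$ in \eqref{eq:na} must depend on $\log(\nr/(\delta\Lt))$ (the $\Lt/\nr$ sitting inside $q$). Checking that the chosen constants make all four per-robot failure probabilities small enough that their sum over the $4\Lt\le 4\nr$ sub-events is at most $\delta/\nr$ --- simultaneously for both branches of the $\min$, and feeding the resulting $\rho_2,\rho_3,\rho_4$ into the condition $1>3\rho_2+\rho_3+\rho_4$ required by \cref{lem:T} --- is the most delicate piece of bookkeeping, but it is routine once the per-pair bounds above are in hand.
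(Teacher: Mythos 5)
Your high-level decomposition (union bound over the four properties and over the $\Lt$ choices of robot~$i$; per-pair bounds via $\thit$-length block arguments and \cref{lem:maj}; deferring the $\tmeet$-branch to the argument of \cref{thm:IP}) matches the paper's skeleton, but the step you use to count bad robots --- Markov's inequality --- is too weak to close the proof, and this is a genuine gap, not bookkeeping. After the union bound over robots $i\in\mathcal{L}$, each property must fail for a \emph{fixed} $i$ with probability at most about $\delta/(4\nr\Lt)$. Markov gives $\Pr{Y \geq \rho_1\Lt} \leq \E{Y}/(\rho_1\Lt) \leq p/\rho_1$, where $p$ is the per-pair failure probability. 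With the proposition's prescribed $\na$ from \eqref{eq:na}, the meeting-count Chernoff bound yields $p \leq q^{2/\rho_1}$ with $q$ from \eqref{eq:q}; in the central regime $\Lt = \Theta(\nr)$ the quantity $q$ is a \emph{constant} depending only on $\delta$, so $p/\rho_1$ does not decay with $\nr$ at all, while the target $\delta/(4\nr\Lt)$ decays like $1/\nr^2$. (Your claim that the Markov step leaves a per-robot failure ``on the order of $\delta/\nr^2$'' relies on the $\Lt/\nr$ factor inside $q$, which only helps when $\Lt = o(\nr)$.) Nor can you repair this by enlarging $\na$: forcing $p \lesssim \delta/(\nr\Lt)$ requires $\na = \Omega(\log(\nr\Lt/\delta))$, which puts a $\log\nr$ factor back into $\tau = \f\thit$ and destroys exactly the constant-detection-time conclusion that \cref{pro:main} and \cref{thm:main} are designed to deliver.

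The paper's proof replaces the first-moment step with a concentration bound that is exponentially small in $\Lt$: \cref{thm:bounds-binomial-distribution} gives $\Pr{Y \geq \rho_1\Lt} \leq p^{\rho_1\Lt/2} = q^{\Lt}$, and \cref{lem:proba_bound} shows $q^{\Lt} \leq \delta/(4\nr\Lt)$, which survives the union bound for every $\Lt \geq 1$. Crucially, that binomial bound needs the per-robot indicators $Y_j$ to be \emph{independent}, which is not automatic since every $Y_j$ involves meetings with the same robot~$i$; the paper obtains independence by fixing (conditioning on) robot~$i$'s trajectory in advance, so that the events ``robot~$j$'s walk meets the fixed trajectory at least $\na$ times'' are independent across $j$. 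This conditioning device is absent from your plan --- under Markov it is unnecessary, which is itself a symptom of how much the first-moment method gives away. A secondary looseness: your per-block meeting claim must handle a \emph{moving} target, since robot~$i$ is also walking; the paper first lets robot~$j$ mix for $4\tmix$ steps (stationarity with probability at least $1-1/e$), then applies the moving-target bound \cref{lem:target} over a window of $\thit$ steps, and uses $4\tmix + \thit \leq 13\thit$; a plain ``Markov property plus geometric tail'' argument for hitting a fixed node does not apply here.
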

\textbf{Remark:} Before we prove the proposition, note that if \mbox{$|\mathcal{M}| = O(\Lt)$}, then the first term of $\tau$ is $O(\thit)$ since $\nr / \Lt \approx 2$, whereas the second term, regardless of the fraction of legitimate robots, is \mbox{$O(\tmeet \log(\nr))$}.

\begin{proof}
Without loss of generality, we will assume that the first term of $\tau$ (the one that is a function of the hitting time) is the minimum. Otherwise, the proof trivially follows from \cref{thm:IP} with success probability \mbox{$1 - \delta/(2\nr)$}.

Consider the trajectory of any legitimate robot~$i$ given by $\chi_i(1),\chi_i(2),\dots,\chi_i(t_f)$ from some arbitrary starting \mbox{time $t=1$} to some arbitrary finishing \mbox{time $t=t_f$}. We show that each property of event~$E$ holds with probability at least \mbox{$1 - \delta/(4 \nr)$}. By doing so, a Union bound over all~$4$ properties yields a total success probability of at least \mbox{$1 - \delta/\nr$}. We start with Property $1$.

\paragraph{Property $1$} We start by claiming that any other legitimate robot~$j$ meets robot~$i$ after $4\tmix + \thit$ time-steps with probability at least $(1-1/e)^2$, where $e$ is the Euler constant. To show this, note that, by \cite[Lemma A.5]{KMS} after $4\tmix$ time-steps, the random walk done by robot~$j$ follows the stationary distribution $\pi$ with probability at least $1-1/e$. Then, by
\cref{lem:target} after $\thit$ time-steps robot~$j$ does not meet robot~$i$ with probability at most \mbox{$(1-1/\thit)^\thit \leq 1/e$}, where we used that \mbox{$(1+x/n)^n\leq e^x$} for \mbox{$n\geq 1, |x|\leq n$.} This proves the claim. 

Now, we can divide time into periods of length \mbox{$4\tmix + \thit$} and repeat this trial  noting that for each trial we have independence. By, Eq. $(10.35)$ in \cite{markovmixing}, we have $4\tmix + \thit \leq 13\thit$.

Let $X$ be the number of meetings between two legitimate robots. After $\f \thit$ time-steps, the expected number of meetings between two legitimate robots is \mbox{$\mu \geq (1-1/e)^2 \f /13 = 2\g$.} By the Chernoff bound (\cref{lem:Chernoff} in \cref{sec:aux}) setting $\gamma = 1/2$, we get that the probability that there are fewer than $\g$ meetings between legitimate robots is at most
\begin{equation}
    \begin{aligned}
    \Pr{X \leq \g} &\leq \Pr{X \leq \frac{\mu}{2}} \leq e^{-\gamma^2 \mu / 2} \\ &\leq \exp\left( -\frac{2\g}{8} \right) \leq q^{2/\rho_1}.
    \end{aligned}
\end{equation}

Let $Y_j$ be the indicator variable that is $1$ if legitimate robot~$j$ meets legitimate robot~$i$ less than $\g$ times. It is important to realize that for any $j,k \neq i$ and $j \neq k$ that $Y_j$ and $Y_k$ are independent since we have fixed the trajectory of robot~$i$ ahead of time. Using this crucial independence, we can bound $Y = \sum_{j\in\mathcal{L} \backslash \{i\}} Y_j$, i.e., the number of legitimate robots that meet robot~$i$ fewer than $\g$ times. To do so, we apply \cref{thm:bounds-binomial-distribution} (c.f. \cref{sec:aux}) with $p = q^{2/\rho_1}$. Note that we can apply \cref{thm:bounds-binomial-distribution} since \mbox{$p \leq \rho_1^2 / \exp(2e(1-\rho_1))$.} We have
\begin{equation}
    \begin{aligned}
    \Pr{Y \geq \rho_1 \Lt} &\leq p^{\rho_1 \Lt /2} = q^{\Lt} \leq 
    \frac{\delta}{4\Lt \cdot \nr},
    \end{aligned}
\end{equation}
by \cref{lem:proba_bound} (c.f. \cref{sec:aux}). Taking the Union bound over all legitimate robots~$\Lt$ proves that Property~$1$ of event~$E$ holds with probability at least \mbox{$1 - \delta/(4 \nr)$.}

\paragraph{Property $2$} Let $\mathcal{L}_1$ be the set of legitimate robots that met robot~$i$ at least $\g$ times. By \cref{lem:maj}, since each robot \mbox{$j\in\mathcal{L}_1$} met robot~$i$ at least $\g$ times, each robot \mbox{$j\in\mathcal{L}_1$} will classify robot~$i$ correctly with probability at least \mbox{$1-q^{24/\rho_1}$.} Now let \mbox{$p = q^{24/\rho_1}$.} In order for $\rho_2 \Lt$ legitimate robots to be misclassified it must hold that at least $\rho_1 \Lt$ robots among the $|\mathcal{L}_1|$ are misclassified.

The probability that more than $\rho_1 \Lt$ are misclassified, by \cref{thm:bounds-binomial-distribution} applied with $\rho = \frac{\rho_1 \Lt}{|\mathcal{L}_1|}$ and $n = |\mathcal{L}_1|$, with \mbox{$p \leq \rho_1^2 / \exp(2e(1-\rho_1))$}, is at most
\begin{equation}
    p^{\rho_1 \Lt /2} = q^{12\Lt} \leq \frac{\delta}{4\Lt \cdot \nr},
\end{equation}
by \cref{lem:proba_bound}.

Therefore, of all the successful meetings of Property~$1$: only $\rho_1 \Lt$ of them are misclassified with probability at least \mbox{$1-\delta/(4\Lt \cdot \nr )$.} Thus, the total number of robots that misclassified robot~$i$ are the ones that met robot~$i$ fewer than $\g$ times, and the ones that met robot~$i$ at least $\g$ times but misclassified it. This gives us \mbox{$\rho_1 \Lt + \rho_1 \Lt = \rho_2 \Lt$.} Taking the Union bound over all $\Lt$ robots yields Property~$2$ with probability at least \mbox{$1-\delta/(4 \nr)$.}

\paragraph{Property $3$} To maximize the number of malicious robots that are classified as legitimate, we can assume without loss of generality that each robot $j \in \mathcal{M}$ meets robot~$i$ at least $\g$ times. By \cref{lem:maj}, robot~$i$ will misclassify each malicious robot $j\in \mathcal{M}$ with probability at most \mbox{$p = q^{24/\rho_1}$.} Without loss of generality, assume \mbox{$|\mathcal{M}| \geq 1$.} We can apply \cref{thm:bounds-binomial-distribution} with \mbox{$\rho = \rho_3 \Lt / |\mathcal{M}|$} and \mbox{$n = |\mathcal{M}|$,} with \mbox{$p \leq \rho_1^2 / \exp(2e(1-\rho_1))$.} Therefore, we get that the probability that more than \mbox{$\rho_3 \Lt = \rho_3 \frac{\Lt}{|\mathcal{M}|}|\mathcal{M}|$} malicious robots are misclassified is at most \mbox{$p^{\rho_3 \Lt / 2} = q^{24\Lt} \leq \delta/(4\Lt \cdot \nr)$,} by \cref{lem:proba_bound}. Taking the Union bound over all $\Lt$ legitimate robots completes the proof of Property~$3$.

\paragraph{Property $4$} The proof of Property~$4$ is the same as Property~$1$ since \mbox{$\g \geq 1$.}

Taking a Union bound over all $4$ properties yields a total success probability of at least \mbox{$1 - \delta/\nr$.} 
\end{proof}

From \cref{lem:T}, we have that event~$E$ leads to all legitimate robots returning the final trust vector correctly. Furthermore, from \cref{pro:main}, we have that event~$E$ holds with probability at least $1-\delta/\nr$, thus proving \cref{thm:main}.

Note that the bound of \cref{thm:main} is tight in the following sense: there exists an infinite class of graphs for which the required time matches the required time of \cref{thm:main} up to constants. In some graphs, e.g., a star of sufficient size $N_{\mathcal{V}}$, it requires $\Omega(\tmeet \log N_{\mathcal{V}}) =\Omega(\log N_{\mathcal{V}})$ rounds to meet. On the other side, on an $N_{\mathcal{V}}\times N_{\mathcal{V}}$ grid for example, it requires $\Omega(\thit)=\Omega(N_{\mathcal{V}}^2)$ rounds.

It is also worth noting that after $O(\thit)$ time steps, the probability of failure is exponentially small in $\Lt$. Since we take the minimum of the hitting time and the meeting time multiplied with  $\log \nr$, we cannot hope to always get an exponentially small failure probability. 

\section{Simulations}
\label{sec:simulations_DCV}
To evaluate our proposed algorithm, we include a simulation study that investigates the time saved for determining the correct trust vectors by utilizing trusted neighboring opinions in our proposed method compared to the Individual Protocol. 
In \cref{fig:varyN}, we varied the number of robots from $4$ to $128$ and checked the average number of time-steps required for legitimate robots using the Individual Protocol (grey) and our proposed DCV protocol (blue) to determine the correct trust vectors. We set $|\mathcal{L}| = |\mathcal{M}| = \nr/2$ for each simulation, and ran the simulation $100$ times for each value of $\nr$. We also tested with different topologies, shown in the top left of each plot in \cref{fig:varyN} using $9$ sites for each topology. The top left plot used a grid site topology, and the top right used a line topology. The bottom left plot considered a random graph generated using the Barab\'{a}si-Albert model where $k < N_{\mathcal{V}}$ sites begin connected in a line, and the remaining sites are added one at a time with edges connecting them to up to~$k$ of the previous sites, chosen at random with $k = 3$. The bottom right plot considered a random graph generated using the Erd\"{o}s-R\'{e}nyi model where an edge is assigned between each pair of sites with probability $0.2$.

Regardless of the site topology, our proposed DCV algorithm takes significantly fewer time-steps to achieve success compared to the Individual Protocol. It can also be seen that the difference between the number of time-steps required for each protocol increases as the number of robots increases, showing that the DCV algorithm performs better compared to the Individual Protocol as the team size is scaled up. We note that in the left-most plots (for the grid and Barab\'{a}si-Albert topologies) the number of time-steps required in simulation using the DCV algorithm actually decreases slightly as the number of robots increases. This is due to the fact that we terminate simulations when the correct final trust vector is found. In \cref{thm:main} we show that it takes constant time to find the correct trust vectors using DCV as the number of robots increases, but that the probability of finding the correct trust vectors increases as the number of robots increases. This phenomenon can cause the decreases evident in the two left-most plots in \cref{fig:varyN}.
Additionally, we include lines that show the time-steps required that is predicted by our theory (purple), i.e., from \cref{thm:main}. The hitting time for different site topologies was computed using \mbox{\cite[Theorem 3.1]{rao2013finding},} and the meeting time was computed using \cite[Theorem 1]{george2018meeting}. The hitting and meeting times for each of the $100$ topologies generated using the random graph generation models (Barab\'{a}si-Albert model and Erd\"{o}s-R\'{e}nyi model) were averaged in order to compute the time required predicted by our theory for those cases. From \cref{fig:varyN} it can be seen that the time-steps required that is predicted by theory closely matches (up to constants) the actual number found in simulation.

In \cref{fig:vary_grid_L}, we varied the number of sites in a grid topology (left), and the number of legitimate versus malicious robots (right). The left plot used $\nr = 32$ robots, with \mbox{$|\mathcal{L}| = |\mathcal{M}| = \nr/2$.} The right plot used a grid site topology with $N_{\mathcal{V}} = 9$ sites, and a constant $\nr = 32$ robots, but varied the number of legitimate robots from $|\mathcal{L}|=2$ to $|\mathcal{L}|=30$ with $|\mathcal{M}| = \nr - |\mathcal{L}|$. Both plots show that the benefits of the DCV algorithm over the Individual Protocol increase as the number of sites increases (left) and the ratio of legitimate to malicious robots increases (right).

\begin{figure}[t]
    \centering
    \includegraphics[scale=0.36]{./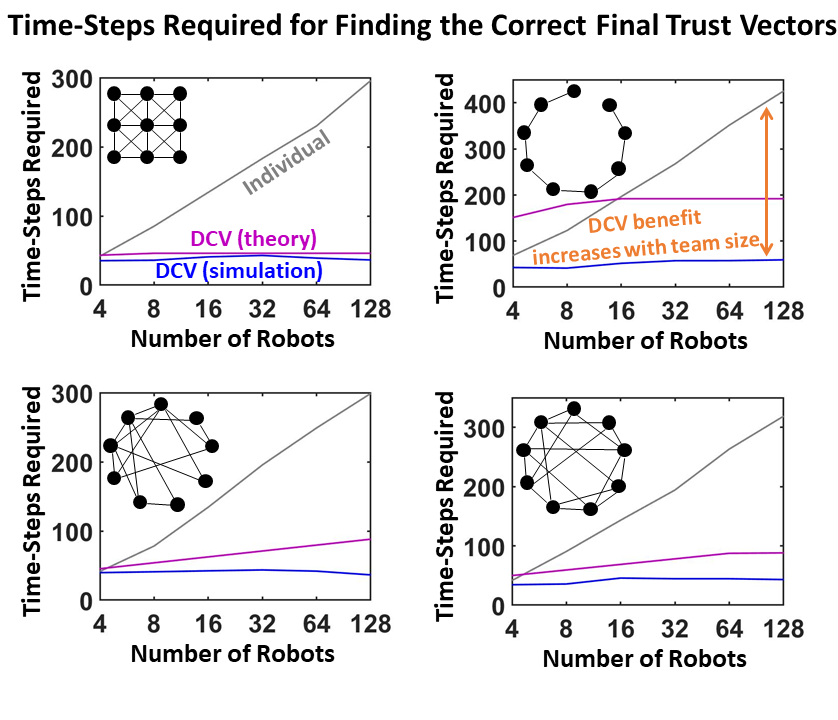}
    \caption[Plots showing the number of time-steps required for robots to find the correct final trust vectors using our proposed DCV protocol compared to the Individual Protocol and what we predict by theory. We vary the number of robots along $4$ different site topologies: grid, line, Barab\'{a}si-Albert, and Erd\"{o}s-R\'{e}nyi.]{Number of time-steps required for robots to find the correct final trust vectors using our proposed DCV protocol in simulation (blue) compared to the Individual Protocol (grey) and what we predict by theory for the DCV algorithm (purple). The number of robots is varied along $4$ different site topologies each consisting of $9$ sites: grid (top left), line (top right), Barab\'{a}si-Albert (bottom left), and Erd\"{o}s-R\'{e}nyi (bottom right). As the number of robots increases, the ratio of legitimate to malicious robots remains constant. The gap in performance between the two methods increases as the team size increases regardless of the site topology.}
    \label{fig:varyN}
    \vspace{-5mm}
\end{figure}

\begin{figure}[t]
    \centering
    \includegraphics[scale=0.36]{./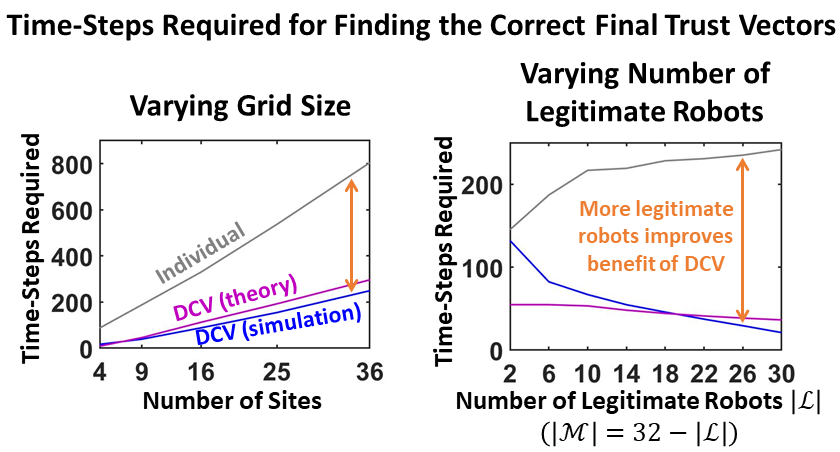}
    \caption[Case study where the number of sites in a grid, and the ratio of legitimate to malicious robots are varied to compare the performance of the DCV Algorithm and the Individual Protocol.]{Number of time-steps required for robots to find the correct final trust vectors using our proposed DCV protocol in simulation (blue) compared to the Individual Protocol (grey) and what we predict by theory for the DCV algorithm (purple). The number of sites is varied along a grid site topology (left), and the number of legitimate and malicious robots is varied using a fixed grid with $N_{\mathcal{V}} = 9$ (right).}
    \label{fig:vary_grid_L}
\end{figure}

\section{Conclusion}

In this paper, we presented an algorithm for utilizing the opinions of trusted neighbors to quickly and effectively determine the true trust of neighboring robots using trust observations even in the case where robots are moving and their set of neighbors change with time. We show that not only does our algorithm help legitimate robots reach an agreement on their trust vectors, but it also reduces the time required to determine the trust vectors correctly by reducing the total number of time-steps that each robot has to individually gather observations for.

\bibliographystyle{IEEEtran}
\bibliography{references.bib}

\appendix

\subsection{Auxiliary Claims}
\label{sec:aux}

\begin{lemma}[Upper bound \cite{Mallmann2021crowd}]\label{lem:maj}
If a robot $i \in \mathcal{L}$ receives $\na = \frac{\log(1/\delta)}{2\eps^2}$ trust observations from another robot $j$, it will know with probability at least $(1-\delta)$ whether robot $j$ is legitimate or malicious by simply relying on the majority of the observations:
\begin{equation}
    \begin{aligned}
    &\Pr{\sum_{\kappa = 1}^{\na} \left( [\mathbf{o}_{i,j}]_{\kappa}-\frac{1}{2} \right) > 0 \text{ }\bigg|\text{ } j \in \mathcal{L}} \geq 1 - \delta, \\ \text{and}& \\ &\Pr{\sum_{\kappa = 1}^{\na} \left( [\mathbf{o}_{i,j}]_{\kappa}-\frac{1}{2} \right) < 0 \text{ }\bigg|\text{ } j \in \mathcal{M}} \geq 1 - \delta.
    \end{aligned}
\end{equation}
\end{lemma}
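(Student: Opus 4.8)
The statement is a standard concentration bound, so the plan is to view the majority vote as a sum of bounded independent random variables and apply Hoeffding's inequality. First I would define, for each $\kappa$, the centered observation $Z_\kappa := [\mathbf{o}_{i,j}]_\kappa - \tfrac12$. Since trust observations take values in $[0,1]$, each $Z_\kappa$ lies in $[-\tfrac12,\tfrac12]$, i.e.\ has range $1$, and by the independence assumption on the observations the $Z_\kappa$ are mutually independent. The quantity of interest is then exactly $\beta_{i,j} = \sum_{\kappa=1}^{\na} Z_\kappa$.

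Next I would split into the two cases. If $j\in\mathcal{L}$, then by \eqref{eq:alpha_legit} and linearity of expectation $\E{\beta_{i,j}\mid j\in\mathcal{L}} = \sum_\kappa \E{Z_\kappa} \geq \na\eps$. The failure event $\{\beta_{i,j}\leq 0\}$ forces the sum to fall below its mean by at least $\na\eps$, so Hoeffding's inequality with per-term range $1$ gives
\[
\Pr{\beta_{i,j}\leq 0 \mid j\in\mathcal{L}} \leq \exp\!\left(-\frac{2(\na\eps)^2}{\na}\right) = e^{-2\na\eps^2}.
\]
The malicious case is entirely symmetric: if $j\in\mathcal{M}$ then \eqref{eq:alpha_mal} yields $\E{\beta_{i,j}\mid j\in\mathcal{M}} \leq -\na\eps$, and the upward deviation event $\{\beta_{i,j}\geq 0\}$ is bounded by the same quantity $e^{-2\na\eps^2}$.

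Finally I would calibrate $\na$. Requiring $e^{-2\na\eps^2}\leq\delta$ and solving gives exactly $\na = \log(1/\delta)/(2\eps^2)$, which matches the stated value; this is precisely where the constant factor of $2$ in the denominator comes from, as it is the Hoeffding exponent divided by the unit range of the observations. Substituting this choice of $\na$ back into both tail bounds makes each failure probability at most $\delta$, which is the two displayed inequalities of the lemma.

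I expect the only genuine subtlety to be the boundedness requirement. The paper states that no assumption is made on the distribution of the observations beyond their expectation, yet Hoeffding (or any Chernoff-type argument) needs a bounded support. I would therefore make the standing assumption $[\mathbf{o}_{i,j}]_\kappa \in [0,1]$ explicit, consistent with the cited trust-observation models, since this is what fixes the per-term range to $1$ and hence pins down the constant. Had the observations been Bernoulli rather than continuous, the multiplicative Chernoff bound used elsewhere in the paper would yield the same scaling, but the additive Hoeffding form is the cleanest route to the exact constant $\na = \log(1/\delta)/(2\eps^2)$ quoted in the lemma.
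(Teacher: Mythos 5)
Your proof is correct. Note that the paper itself never proves this lemma---it is imported as an auxiliary claim from the cited Crowd Vetting work \cite{Mallmann2021crowd}---and the argument you give is the standard one used there: center each observation, note that \eqref{eq:alpha_legit} and \eqref{eq:alpha_mal} force a mean gap of at least $\na\eps$ in magnitude, and apply Hoeffding with per-term range $1$ to get a failure probability of $e^{-2\na\eps^2}$, which equals $\delta$ precisely at $\na=\log(1/\delta)/(2\eps^2)$. Your closing caveat is also well taken and is genuine content rather than pedantry: despite the paper's remark that no distributional assumptions are made beyond the conditional means, the stated constant requires bounded support (or at least a sub-Gaussian tail), e.g.\ $[\mathbf{o}_{i,j}]_\kappa\in[0,1]$ as in the cited trust-observation models; a mean gap alone cannot control tail probabilities (heavy-tailed noise, e.g.\ Cauchy, keeps the misclassification probability bounded away from $0$ for every $\na$), so making that hypothesis explicit is exactly the right call.
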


\begin{proposition}[\cite{mitzenmacher2017probability}]\label{lem:Chernoff}
Let $X_1, \dots, X_n$ be independent Bernoulli random variables with $X = \sum_i^n X_i$ and \mbox{$\mu = \E{X}$.} Then, for any $0 < \gamma < 1$,
\begin{equation}
    \Pr{X \leq (1-\gamma)\mu} \leq e^{-\gamma^2\mu/2}.
\end{equation}
\end{proposition}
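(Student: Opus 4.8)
The plan is to prove this lower-tail Chernoff bound by the standard exponential-moment method, applied to the transformed variable $e^{-tX}$ for a free parameter $t>0$. First I would observe that since the exponential is monotone, the event $\{X \le (1-\gamma)\mu\}$ coincides with $\{e^{-tX} \ge e^{-t(1-\gamma)\mu}\}$, so Markov's inequality applied to the nonnegative random variable $e^{-tX}$ yields $\Pr{X \le (1-\gamma)\mu} \le e^{t(1-\gamma)\mu}\,\E{e^{-tX}}$. The lower tail is what forces the sign $-t$ in the exponent; the rest of the argument is driven by controlling this moment generating function.

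Next I would exploit independence to factor the moment generating function as $\E{e^{-tX}} = \prod_{i=1}^n \E{e^{-tX_i}}$. For each Bernoulli variable with $p_i := \E{X_i}$ one has $\E{e^{-tX_i}} = 1 + p_i(e^{-t}-1)$, and the elementary inequality $1+x \le e^x$ gives $\E{e^{-tX_i}} \le \exp(p_i(e^{-t}-1))$. Multiplying these and using $\sum_i p_i = \mu$ produces the clean bound $\E{e^{-tX}} \le \exp(\mu(e^{-t}-1))$, hence $\Pr{X \le (1-\gamma)\mu} \le \exp\big(\mu[(e^{-t}-1) + t(1-\gamma)]\big)$ for every $t>0$. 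I would then optimize by choosing $t = \ln\frac{1}{1-\gamma}$ (which is positive for $0<\gamma<1$), so that $e^{-t} = 1-\gamma$ and the exponent collapses to the sharp form $\mu\big[-\gamma - (1-\gamma)\ln(1-\gamma)\big]$.

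The main obstacle — really the only nonroutine step — is the final scalar reduction from this sharp form to the stated Gaussian-type bound, namely showing $-\gamma - (1-\gamma)\ln(1-\gamma) \le -\gamma^2/2$ on $(0,1)$. I would handle this by setting $f(\gamma) := (1-\gamma)\ln(1-\gamma) + \gamma - \gamma^2/2$ and verifying $f(0)=0$ together with $f'(\gamma) = -\ln(1-\gamma) - \gamma \ge 0$, where the derivative's nonnegativity follows from the series expansion $-\ln(1-\gamma) = \gamma + \gamma^2/2 + \cdots \ge \gamma$. Since $f$ vanishes at the origin and is nondecreasing, $f \ge 0$ throughout $(0,1)$; substituting this into the exponent gives $\Pr{X \le (1-\gamma)\mu} \le e^{-\gamma^2\mu/2}$, which is the claim. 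Everything except this last convexity-type estimate is purely mechanical, so I would expect the verification of the scalar inequality to be where any care is needed.
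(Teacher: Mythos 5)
Your proof is correct: the exponential-moment method with the bound $\E{e^{-tX}} \le \exp\left(\mu(e^{-t}-1)\right)$, the optimal choice $t=\ln\frac{1}{1-\gamma}$, and the scalar inequality $(1-\gamma)\ln(1-\gamma)+\gamma \ge \gamma^2/2$ (justified via $f(0)=0$ and $f'(\gamma)=-\ln(1-\gamma)-\gamma\ge 0$ from the series expansion) are all sound, including your handling of non-identical success probabilities $p_i$. The paper gives no proof of its own here --- it states the proposition as a known result cited from Mitzenmacher and Upfal --- and your argument is precisely the canonical proof found in that reference, so there is nothing substantive to compare.
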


\begin{lemma}[{\cite[Theorem 1.3]{movingtarget}}]\label{lem:target}
For any $t \in \mathbb{N}\setminus\{ 0 \}$ and any sequence $\chi_i(1),\chi_i(2),\dots,\chi_i(t) \in \mathcal{V}$ we have that for any lazy random walk $\mathbf{P}_j(\kappa)$ done by a robot~$j$ for $\kappa \in \{1,\dots,t\}$ starting from the stationary distribution $\pi$, that
\[\Pr{\forall \kappa, \chi_j(\kappa) \neq \chi_i(\kappa)} \leq \left(1-1/\thit\right)^\kappa.\]
\end{lemma}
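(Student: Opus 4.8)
Since the statement is \cite[Theorem 1.3]{movingtarget}, one route is simply to invoke it; but to prove it from scratch I would write the avoidance probability as the total mass surviving a sequence of \emph{kill-then-step} operations and try to show that each step contracts a suitable potential by the factor $(1-1/\thit)$. Write $s_\kappa := \chi_i(\kappa)$ for the arbitrary but \emph{fixed} target positions, and let $K_s := I - e_s e_s^{\top}$ be the operator that zeroes out the coordinate of the current target. Starting from the (sub-)distribution $\mu_0 = \pi$, form $\mu_\kappa$ by first removing the mass sitting on $s_\kappa$ (applying $K_{s_\kappa}$) and then taking one lazy-walk step (applying $\mathbf{P}_j$); the probability that robot~$j$ never coincides with the target through time $t$ is exactly the total mass $\|\mu_t\|_1$, and the plan is to bound this mass by $(1-1/\thit)^t$.

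The key steps, in order, would be the following. First, pass to densities $\rho_\kappa := \mu_\kappa/\pi$ and use reversibility of the lazy walk, so that a walk step acts on densities as the operator $\mathbf{P}_j$ that is self-adjoint with respect to $\langle f,g\rangle_\pi = \sum_v \pi(v) f(v) g(v)$ and preserves the $\pi$-mean, while the kill step only lowers $\rho$ pointwise. This yields the exact identity that the survival probability equals $\mathbb{E}_\pi[\rho_t] = 1 - \sum_{\kappa=1}^{t} \pi(s_\kappa)\,\rho_{\kappa-1}(s_\kappa)$, reducing everything to a lower bound on the accumulated decrement. Second, feed in the hitting-time bound $\max_v \mathbb{E}_\pi[\tau_v] \le \thit$, which states that from stationarity the walk reaches any \emph{fixed} vertex quickly, to certify that $\sum_\kappa \pi(s_\kappa)\,\rho_{\kappa-1}(s_\kappa) \ge 1 - (1-1/\thit)^t$. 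A convenient way to organize this second step is to block the $t$ steps into windows of length $\Theta(\thit)$ and, within each window, apply Markov's inequality to the hitting time of the momentarily fixed target, together with laziness, to argue the target is met with constant probability, producing the geometric decay $(1-1/\thit)^t$ up to constants.

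The main obstacle is the \emph{adversarial motion} of the target. A naive one-step $\ell_1$ contraction is provably false: the adversary can always place $s_\kappa$ on a vertex that currently carries little of the walk's mass --- indeed, on the vertex just vacated after the previous kill --- so $\pi(s_\kappa)\,\rho_{\kappa-1}(s_\kappa)$ can be far below $(1/\thit)\,\mathbb{E}_\pi[\rho_{\kappa-1}]$, and the factor $1/\thit$ cannot be extracted step by step. The decay must therefore be argued \emph{globally}: either spectrally, by bounding the operator norm of the symmetrized killed operator $D_\pi^{1/2} \mathbf{P}_j K_s D_\pi^{-1/2}$ uniformly over the target vertex and relating its defect to the hitting time, or via the blocking argument above while carefully controlling the fact that conditioning on survival biases the walk's law away from $\pi$, so that re-equilibration within each window (using mixing and laziness) is needed. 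Resolving this coupling of adversarial motion with the hitting-time bound is precisely the technical content of \cite[Theorem 1.3]{movingtarget}, which I would cite for the exact constant in the exponent.
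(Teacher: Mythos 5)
The paper offers no proof of this lemma at all: it appears in the appendix of auxiliary claims and is imported verbatim by citing \cite[Theorem 1.3]{movingtarget}. Your proposal ultimately rests on that same citation --- you explicitly defer the core technical difficulty (the adversarially moving target, and extracting the exact $1/\thit$ rate rather than a constant-factor-per-$\Theta(\thit)$-window decay) to the cited theorem --- so your treatment coincides with the paper's, and the supplementary operator/blocking sketch, whose gaps you candidly flag, is neither needed for nor a replacement of the citation.
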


The following is consequence of
\cite[Theorem 4]{Mallmann2021crowd} / Equation 10 of \cite{HR90}.
\begin{proposition}\label{thm:bounds-binomial-distribution}
Let $Y=\sum_{i=1}^n Y_i$ be the sum of $n$ independent and identically distributed random variables with
$\Pr { Y_i=1} =p$ and $\Pr { Y_i=0 } =1-p$
with \mbox{$p\leq \rho^2/\exp(2e(1-\rho)$.} We have for any
$\rho \in (0,0.8]$ that
\[ \Pr {Y \geq \rho n } \leq p^{\rho n/2}.\]
\end{proposition}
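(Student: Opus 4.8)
The plan is to recognize that $Y$ is a $\mathrm{Binomial}(n,p)$ random variable and to apply the sharp relative-entropy form of the Chernoff tail bound supplied by the cited results \cite{Mallmann2021crowd, HR90}. First note that the hypothesis forces $p < \rho$: since $\rho \in (0,0.8]$ we have $\exp(2e(1-\rho)) > 1 > \rho$, so $p \leq \rho^2/\exp(2e(1-\rho)) < \rho$. Hence $\rho n$ lies strictly above the mean $np$, and the upper-tail bound applies, giving
\[
\Pr{Y \geq \rho n} \leq \left(\frac{p}{\rho}\right)^{\rho n}\left(\frac{1-p}{1-\rho}\right)^{(1-\rho)n}.
\]
It therefore suffices to show that the per-trial base is at most $p^{\rho/2}$, i.e.\ that $\rho\ln\frac{p}{\rho} + (1-\rho)\ln\frac{1-p}{1-\rho} \leq \tfrac{\rho}{2}\ln p$, after which raising to the $n$-th power finishes the claim.

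Next I would clear the two ``entropy'' terms one at a time. Because $\ln(1-p) \leq 0$, the contribution of the second factor satisfies $(1-\rho)\ln\frac{1-p}{1-\rho} \leq -(1-\rho)\ln(1-\rho)$, so it is enough to prove $\frac{\rho}{2}\ln p \leq \rho\ln\rho + (1-\rho)\ln(1-\rho)$. Here I would finally invoke the hypothesis on $p$ in logarithmic form, $\ln p \leq 2\ln\rho - 2e(1-\rho)$, which bounds the left-hand side by $\rho\ln\rho - \rho e(1-\rho)$. Cancelling the common $\rho\ln\rho$ and dividing by $1-\rho>0$, the whole statement reduces to the single elementary inequality $\ln(1-\rho) \geq -e\rho$, equivalently $1-\rho \geq e^{-e\rho}$.

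The only real obstacle is verifying this last inequality uniformly over the stated range $\rho \in (0,0.8]$, and it is precisely here that the constant $2e$ and the cutoff $0.8$ in the statement are pinned down. I would set $g(\rho) = \ln(1-\rho) + e\rho$ and note $g(0)=0$ and $g'(\rho) = e - (1-\rho)^{-1}$, which is positive for $\rho < 1 - 1/e$ and negative afterwards; thus $g$ increases from $0$ up to $\rho = 1-1/e$ and then decreases, so on $(0,0.8]$ its minimum is attained at an endpoint, and both $g(0)=0$ and $g(0.8) = \ln(0.2) + 0.8e > 0$ are nonnegative. Hence $g \geq 0$ throughout, which establishes $\ln(1-\rho) \geq -e\rho$ and completes the chain of inequalities. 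The cutoff $0.8$ is genuinely needed: the margin $g(0.8)$ is comfortably positive, but pushing $\rho$ toward $1$ would eventually violate the bound, which is why the proposition restricts to $\rho \in (0,0.8]$.
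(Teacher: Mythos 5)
Your proposal is correct, and it is in fact more than the paper provides: the paper gives no proof of this proposition at all, importing it as a stated ``consequence of'' Theorem 4 of \cite{Mallmann2021crowd} / Equation 10 of \cite{HR90}. Your derivation starts from precisely that cited inequality (the relative-entropy form of the Chernoff--Hoeffding bound, legitimately applicable since you verify $p<\rho$), and the reduction chain is sound: discarding the $\ln(1-p)\le 0$ term, substituting $\ln p \le 2\ln\rho - 2e(1-\rho)$, and cancelling down to $\ln(1-\rho)\ge -e\rho$ is exactly the algebra needed, and your monotonicity/endpoint check of $g(\rho)=\ln(1-\rho)+e\rho$ (with $g(0)=0$ and $g(0.8)=\ln(0.2)+0.8e\approx 0.57>0$) correctly settles that inequality on $(0,0.8]$, explaining where both the constant $2e$ and the cutoff $0.8$ come from. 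The only cosmetic caveat is the degenerate case $p=0$, where the logarithmic manipulations are undefined but the claim is immediate since then $\Pr{Y\ge \rho n}=0$; with that one-line remark your argument is a complete, self-contained proof of the statement the paper merely cites.
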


\begin{lemma}\label{lem:proba_bound}
Consider the notation of \cref{lem:T} and \cref{pro:main}.
We have,
$q^{\Lt} \leq \frac{\delta}{4\nr \Lt}$.
\end{lemma}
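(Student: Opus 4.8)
The plan is to eliminate, after substituting the definition of $q$ from \eqref{eq:q}, the dependence on both $\nr$ and $\delta$ and thereby reduce the claim to a clean one-variable inequality in $\Lt$, which is then settled by a short exponential-versus-polynomial comparison. Recall $q = \frac{\delta\rho_1}{e^{2e}}\frac{\Lt}{\nr}$ with $\rho_1 = 0.1$. First I would rewrite the target $q^{\Lt} \le \frac{\delta}{4\nr\Lt}$ in the equivalent form $4\nr\Lt\,q^{\Lt} \le \delta$, and divide through by $\delta>0$. Expanding the $\Lt$-th power, this becomes $4\,(0.1)^{\Lt}\,\delta^{\Lt-1}\,\Lt^{\Lt+1}\,(e^{2e})^{-\Lt}\,\nr^{1-\Lt} \le 1$.

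Next I would discard the two ``nuisance'' factors. Since $\delta$ is a failure probability we may assume $\delta \le 1$, and because $\Lt \ge 1$ this yields $\delta^{\Lt-1} \le 1$. Since the legitimate robots are a subset of the team, $\Lt \le \nr$, and as $\Lt-1 \ge 0$ this gives $\nr^{1-\Lt} = \nr^{-(\Lt-1)} \le \Lt^{-(\Lt-1)}$, hence $\Lt^{\Lt+1}\,\nr^{1-\Lt} \le \Lt^{2}$. Substituting both bounds collapses the left-hand side to $4\,\Lt^2\big(\tfrac{0.1}{e^{2e}}\big)^{\Lt} = 4\Lt^2/(10\,e^{2e})^{\Lt}$, so the whole statement reduces to proving $4\Lt^2 \le (10\,e^{2e})^{\Lt}$ for every integer $\Lt \ge 1$.

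To finish I would verify this last inequality directly. Writing $a = 10\,e^{2e} \approx 2300$, it holds at $\Lt = 1$ since $4 \le a$, and the quantity $a^{\Lt}/(4\Lt^2)$ is nondecreasing in $\Lt$: incrementing $\Lt$ multiplies it by $a\,\Lt^2/(\Lt+1)^2 \ge a/4 > 1$, using $\Lt/(\Lt+1)\ge 1/2$. Thus it never drops below its value $a/4 > 1$ at $\Lt = 1$, which is exactly the statement that a fixed polynomial is dominated by an exponential with large base.

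The only genuine obstacle here is bookkeeping: one must confirm that the large constant $\tfrac{1}{\rho_1}e^{2e}$ baked into $q$ is big enough to absorb the polynomial prefactor $4\nr\Lt$ and the shrinking factor $(\Lt/\nr)^{\Lt}$ \emph{simultaneously}. The observation that makes this painless is that $\Lt \le \nr$ lets the $\nr$-dependence cancel entirely (the worst case is effectively $\Lt = \nr$), after which the base $10\,e^{2e}$ dwarfs $4\Lt^2$ at once. I would also flag the mild standing assumption $\delta \le 1$, which is what permits dropping the $\delta^{\Lt-1}$ factor.
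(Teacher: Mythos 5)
Your proof is correct, but it takes a genuinely different route from the paper's. The paper first replaces $q$ by $\delta\phi$ with $\phi = \frac{\Lt}{20\nr}$ (rounding the constant $10e^{2e}$ down to $20$, and silently using $\delta \le 1$ to write $q^{\Lt} \le \delta\phi^{\Lt}$), which reduces the claim to the two-variable inequality $\left(\frac{20\nr}{\Lt}\right)^{\Lt} \ge 4\Lt\nr$; it then proves this by a four-way case analysis on the size of $\Lt$ relative to $\nr$ (namely $\Lt = 1$, $\Lt \in \{2,3\}$, $4 \le \Lt \le \sqrt{\nr}$, and $\sqrt{\nr} < \Lt \le \nr$). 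You instead eliminate $\nr$ outright: since $\Lt \le \nr$ and the exponent $1-\Lt$ is nonpositive, $\nr^{1-\Lt} \le \Lt^{1-\Lt}$, which collapses everything to the one-variable inequality $4\Lt^2 \le (10e^{2e})^{\Lt}$, settled by a trivial induction (the ratio of consecutive terms is at least $10e^{2e}/4 > 1$). This is cleaner and arguably sharper in exposition: your single substitution makes the paper's case analysis unnecessary, and in fact your same trick applied to the paper's reduced inequality, via $\left(\frac{\nr}{\Lt}\right)^{\Lt} \ge \frac{\nr}{\Lt}$, would give $20^{\Lt}\frac{\nr}{\Lt} \ge 4\Lt\nr$, i.e., $20^{\Lt} \ge 4\Lt^2$, so even the weaker constant $20$ suffices for your argument. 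One further point in your favor: you flag the standing assumption $\delta \le 1$ explicitly, whereas the paper needs it just as much (its step $q^{\Lt} \le \delta\phi^{\Lt}$ fails otherwise) but never states it.
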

\begin{proof}
Let $\phi= \frac{\Lt}{20 \nr}$.
Since $\Lt \geq 1$, and by definition of $q$, it suffices to show that
\begin{align}\label{eq:rira}
\phi^{\Lt} \leq  \frac{1}{4\nr \Lt}.
\end{align}
since then $q^{\Lt} \leq \delta \phi^{\Lt} \leq \frac{\delta}{4\nr \Lt}$.

Note that \cref{eq:rira} is equivalent to proving the following equation
\begin{align}\label{eq:riru}
\frac{1}{\phi^{\Lt}} = \left( \frac{20 \nr}{\Lt} \right)^{\Lt} \geq 4 \Lt \nr.
\end{align}

We now prove \cref{eq:riru} by distinguishing between the following cases.
\begin{itemize}
\item $\Lt =1$: holds trivially.
\item $\Lt  \in \{2,3\}$:
\[ \left( \frac{20 \nr}{\Lt} \right)^{\Lt} > (5\nr)^{\Lt}
\geq 25 \nr^2 \geq 4 \Lt \nr.\]
\item $\Lt \in \{4, 5, \dots, \sqrt{\nr}\}$:
\[ \left( \frac{20 \nr}{\Lt} \right)^{\Lt} \geq (20\sqrt{\nr})^{\Lt} \geq 20^4 \nr^2  \geq 4 \Lt \nr.\]
\item $\Lt \in \{\sqrt{\nr}+1,\dots,\nr\}$:
\[ \left( \frac{20 \nr}{\Lt} \right)^{\Lt} \geq (20)^{\Lt} \geq 20^{\sqrt{ \nr}} \geq 4 \nr^2  \geq 4 \Lt \nr,\]
where the penultimate inequality can be verified by taking logarithms on both sides; yielding
\mbox{$\sqrt{\nr}\log(20)\geq 2\log(2 \nr)$ for $\nr \geq 1$.}
\end{itemize}
\end{proof}
\end{document}